\newtheorem{theorem}{Theorem}[section]
\newtheorem{lemma}[theorem]{Lemma}
\newtheorem{proposition}[theorem]{Proposition}
\newtheorem{definition}[theorem]{Definition}
\newtheorem{remark}[theorem]{Remark}
\newcommand{\param}{\theta}
\newcommand{\dNat}{d_{\text{Nat}}}
\newcommand{\SelfInf}{\text{SI}}
\DeclareMathOperator*{\argmin}{arg\,min}
\title{\textbf{Natural Geometry of Robust Data Attribution:\\ From Convex Models to Deep Networks}}
\author{
Shihao Li \quad Jiachen Li \quad Dongmei Chen \\[0.5em]
The University of Texas at Austin \\
\texttt{shihaoli01301@utexas.edu} \quad \texttt{jiachenli@utexas.edu} \quad \texttt{dmchen@me.utexas.edu}
}
\date{}
\begin{document}

\maketitle

\begin{abstract}
Data attribution methods identify which training examples are responsible for a model's predictions, but their sensitivity to distributional perturbations undermines practical reliability. We present a unified framework for \emph{certified robust attribution} that extends from convex models to deep networks. For convex settings, we derive Wasserstein-Robust Influence Functions (W-RIF) with provable coverage guarantees. For deep networks, we demonstrate that Euclidean certification is rendered vacuous by \textbf{spectral amplification}---a mechanism where the inherent ill-conditioning of deep representations inflates Lipschitz bounds by over $10{,}000\times$. This explains why standard TRAK scores, while accurate point estimates, are \emph{geometrically fragile}: naive Euclidean robustness analysis yields 0\% certification. Our key contribution is the \textbf{Natural Wasserstein} metric, which measures perturbations in the geometry induced by the model's own feature covariance. This eliminates spectral amplification, reducing worst-case sensitivity by $76\times$ and \emph{stabilizing} attribution estimates. On CIFAR-10 with ResNet-18, Natural W-TRAK certifies 68.7\% of ranking pairs compared to 0\% for Euclidean baselines---to our knowledge, the first non-vacuous certified bounds for neural network attribution. Furthermore, we prove that the Self-Influence term arising from our analysis equals the Lipschitz constant governing attribution stability, providing theoretical grounding for leverage-based anomaly detection. Empirically, Self-Influence achieves 0.970 AUROC for label noise detection, identifying 94.1\% of corrupted labels by examining just the top 20\% of training data.
\end{abstract}

\section{Introduction}
\label{sec:introduction}

Data attribution---the problem of identifying which training examples are responsible for a model's predictions---has become a foundational tool for understanding, debugging, and improving machine learning systems \cite{koh2017understanding, park2023trak, pruthi2020estimating}. When a deployed model produces an unexpected prediction, practitioners need to trace this behavior back to specific training examples. When training data is contributed by multiple parties, fair compensation requires quantifying each contributor's influence \cite{ghorbani2019data, jia2019towards}. When models exhibit undesirable biases or memorize sensitive information, identifying the responsible training examples is the first step toward remediation \cite{mehrabi2021survey, bourtoule2021machine}.

The dominant approach to data attribution is the \emph{influence function} \cite{ling1984residuals, koh2017understanding}, which uses a first-order Taylor expansion to approximate how model predictions would change if a training example were removed or upweighted. For deep networks, recent work has developed scalable variants including TracIn \cite{pruthi2020estimating}, representer points \cite{yeh2018representer}, and TRAK \cite{park2023trak}. These methods have enabled attribution at unprecedented scale, powering applications from data debugging to machine unlearning.

\paragraph{The Fragility Problem.}
Despite their practical success, existing attribution methods suffer from a critical limitation: \emph{sensitivity to distributional perturbations}. Small changes in the training data---adding noise, removing a few examples, or encountering natural distribution shift---can significantly alter influence rankings \cite{liu2025rethinking, sogaard2021revisiting}. This fragility undermines the reliability of downstream decisions. If perturbing a single training point can flip which examples appear most influential for a test prediction, how can practitioners trust attribution-based analyses for high-stakes applications?

This fragility is particularly acute in deep networks, where high-dimensional feature spaces and ill-conditioned representations introduce compounding instabilities. Prior work has documented that influence functions can be ``fragile'' in deep learning \cite{basu2020influence}, but has not provided a systematic framework for quantifying or mitigating this fragility.

\paragraph{Certified Robust Attribution.}
We address this challenge by developing \emph{certified robust attribution}: provable bounds on how influence estimates can change under distributional perturbations. Rather than computing a single point estimate of influence, we compute an \emph{interval} guaranteed to contain the true influence under all distributions within a specified perturbation radius. This transforms attribution from a fragile point estimate into a certified statement: ``Training example $z_i$ has influence in the range $[\underline{\mathcal{I}}, \overline{\mathcal{I}}]$ for all distributions within Wasserstein distance $\epsilon$ of the empirical distribution.''

Our framework builds on distributionally robust optimization (DRO) \cite{rahimian2019distributionally, kuhn2019wasserstein,blanchet2022optimal}, which provides principled tools for worst-case analysis over distributional uncertainty sets. By casting attribution as a functional of the training distribution, we derive closed-form expressions for the worst-case influence under Wasserstein perturbations.

\paragraph{The Spectral Amplification Barrier.}
A naive extension of robust influence to deep networks fails catastrophically. The core issue is what we term \emph{spectral amplification}: the Lipschitz constant of the attribution map---which governs the width of robust intervals---is implicitly amplified by the inverse of the smallest eigenvalue of the feature covariance matrix. In deep networks, feature covariances are notoriously ill-conditioned, with eigenvalues spanning many orders of magnitude \cite{sagun2017empirical, papyan2020prevalence}. On ResNet-18 features, we observe condition numbers exceeding $10^5$, causing Euclidean Lipschitz constants to explode by a factor of $10{,}000\times$. The resulting robust intervals are so wide as to be completely vacuous---certifying nothing.

This explains a key empirical finding: while TRAK \cite{park2023trak} provides accurate point estimates that correctly identify influential training examples, these estimates are \emph{geometrically fragile}. Under naive Euclidean robustness analysis, 0\% of ranking pairs can be certified as stable.

\paragraph{The Natural Metric.}
Our key insight is that the failure of Euclidean bounds reflects a \emph{geometry mismatch}. The Euclidean metric treats all feature directions equally, but the attribution function (via the inverse covariance $Q^{-1}$) treats them unequally---directions with low data variance are weighted heavily. This mismatch causes perturbations along low-variance directions to have outsized effects on attribution scores.

We resolve this mismatch by introducing the \textbf{Natural Wasserstein} metric, which measures distributional perturbations in the geometry induced by the model's own feature covariance. Formally, we replace the Euclidean distance $\|z - z'\|_2$ with the Mahalanobis distance $\|\phi(z) - \phi(z')\|_{Q^{-1}}$, where $Q$ is the feature covariance matrix. This metric automatically downweights perturbations along low-variance directions, exactly counterbalancing the spectral amplification inherent in attribution scores.

The Natural metric admits a geometric interpretation: it measures distance in the ``whitened'' feature space where the covariance is identity. In this space, all directions contribute equally, and the geometry of the uncertainty set aligns with the geometry of the attribution function.

\paragraph{Self-Influence as Lipschitz Constant.}
A key quantity emerging from our analysis is the \emph{Self-Influence}: $\text{SI}(z) = \phi(z)^\top Q^{-1} \phi(z)$. This quantity---mathematically equivalent to the classical leverage score from robust statistics \cite{cook1977detection, hoaglin1978hat}---has been used heuristically for anomaly detection \cite{koh2017understanding}. Our contribution is to prove that Self-Influence is precisely the \emph{Lipschitz constant} governing attribution stability under the Natural metric. This provides the first theoretical justification connecting leverage scores to certified robustness, and explains why high-leverage points (which are geometrically unstable) tend to be anomalous.

\paragraph{Contributions.}
Our contributions are as follows:

\begin{enumerate}[leftmargin=2em]
    \item \textbf{Wasserstein-Robust Influence Functions (W-RIF).} For convex models, we derive the \emph{complete sensitivity kernel} that captures how influence changes under distributional perturbations. Prior formulations omitted the gradient sensitivity term, leading to $O(\epsilon)$ approximation error; our complete kernel achieves $O(\epsilon^2)$ error. We prove that W-RIF intervals achieve valid coverage of leave-one-out influence (\Cref{sec:convex}).
    
    \item \textbf{The Spectral Amplification Barrier.} We identify a fundamental obstacle to extending robust attribution to deep networks: Euclidean Lipschitz constants are amplified by the condition number of the feature covariance, rendering naive bounds vacuous. On ResNet-18 features, this amplification reaches $10{,}000\times$ (\Cref{sec:gap}).
    
    \item \textbf{Natural W-TRAK.} We introduce the Natural Wasserstein metric induced by the feature covariance, which eliminates spectral amplification. We prove a closed-form Lipschitz bound in terms of Self-Influence scores, enabling efficient computation of certified intervals (\Cref{sec:natural}).
    
    \item \textbf{Empirical Validation.} We demonstrate that standard TRAK scores are geometrically fragile: Euclidean analysis yields 0\% certification. Natural W-TRAK stabilizes these estimates, reducing worst-case sensitivity by $76\times$ and certifying 68.7\% of ranking pairs on CIFAR-10 with ResNet-18---to our knowledge, the first non-vacuous certified bounds for neural network attribution. Furthermore, Self-Influence achieves 0.970 AUROC for label noise detection, providing a theoretically grounded signal for data cleaning (\Cref{sec:experiments}).
\end{enumerate}

\paragraph{Paper Organization.}
\Cref{sec:related} discusses related work. \Cref{sec:convex} develops W-RIF for convex models, deriving the complete sensitivity kernel and proving coverage guarantees. \Cref{sec:gap} explains why naive extension to deep networks fails, identifying the spectral amplification phenomenon. \Cref{sec:natural} introduces the Natural metric and derives Natural W-TRAK with closed-form Lipschitz bounds. \Cref{sec:experiments} validates our framework empirically. \Cref{sec:conclusion} concludes with limitations and future directions.

\section{Related Work}
\label{sec:related}

\paragraph{Influence Functions and Data Attribution.}
Influence functions originate in robust statistics \cite{hampel1974influence, ling1984residuals} and were introduced to deep learning by \cite{koh2017understanding}, enabling identification of training examples responsible for test predictions. Subsequent work developed scalable approximations: TracIn \cite{pruthi2020estimating} accumulates gradient products along training trajectories; representer points \cite{yeh2018representer} decompose predictions via activation-space similarities; and TRAK \cite{park2023trak} achieves scalability through random projections of gradient features. Datamodels \cite{ilyas2022datamodels} take an empirical approach, training thousands of models on random subsets to estimate influence. Our work is complementary: we provide robustness certificates applicable to gradient-based attribution methods, with TRAK as our primary instantiation.

\paragraph{Fragility of Attribution Methods.}
\cite{basu2020influence} demonstrated empirically that influence functions are sensitive to hyperparameters and Hessian approximations in deep networks. \cite{liu2025rethinking, sogaard2021revisiting} showed that different approximation schemes yield inconsistent rankings, and \cite{bae2022if} analyzed conditions under which influence estimates are reliable. These works document the fragility problem; we provide a theoretical framework for quantifying it through certified intervals and mitigating it through geometry-aware metrics.

\paragraph{Distributionally Robust Optimization.}
Wasserstein DRO \cite{kuhn2019wasserstein, rahimian2019distributionally} provides worst-case guarantees over distributional uncertainty sets, with tractable reformulations via Kantorovich duality \cite{villani2008optimal}. Applications span supervised learning \cite{sinha2017certifying}, reinforcement learning \cite{derman2020distributional}, and fairness \cite{hashimoto2018fairness}. We apply DRO to the \emph{attribution} problem rather than the training objective, deriving robust bounds on influence estimates under distributional perturbations.

\paragraph{Leverage Scores and Anomaly Detection.}
Leverage scores---diagonal elements of the hat matrix---are classical tools for outlier detection in statistics \cite{cook1977detection, hoaglin1978hat}. In machine learning, self-influence has been used heuristically for data debugging \cite{koh2017understanding, chu2016data}. We prove that self-influence equals the Lipschitz constant of the attribution map under the Natural metric, providing theoretical grounding for this empirical practice and unifying robustness certification with anomaly detection.

\paragraph{Certified Robustness.}
Certified robustness has been extensively studied for \emph{predictions} under \emph{input} perturbations, including interval bound propagation \cite{gowal2018effectiveness}, randomized smoothing \cite{cohen2019certified}, and Lipschitz networks \cite{virmaux2018lipschitz}. Our work addresses a distinct question: certified robustness of \emph{attributions} under \emph{distributional} perturbations. To our knowledge, this connection has not been previously explored.

\paragraph{Neural Network Feature Geometry.}
The geometry of deep network representations has been studied extensively. Neural collapse \cite{papyan2020prevalence} shows that features converge to simplex structures in late training. Spectral analysis of feature covariances \cite{sagun2017empirical, miyato2018spectral} reveals heavy-tailed eigenvalue distributions with large condition numbers. We leverage these geometric insights, showing that ill-conditioned covariances cause spectral amplification in attribution, and proposing the Natural metric as a remedy.

\section{The Convex Case: Wasserstein-Robust Influence Functions}
\label{sec:convex}

We develop the theory of robust attribution for convex optimization, where classical influence functions accurately approximate leave-one-out retraining.

\subsection{Problem Setup}

Consider a training dataset $\mathcal{D} = \{z_1, \ldots, z_n\}$ where each $z_i \stackrel{\text{iid}}{\sim} P$ for some unknown distribution $P$ on $\mathcal{Z}$. We define the empirical distribution $P_n = \frac{1}{n}\sum_{i=1}^n \delta_{z_i}$ and train a model by minimizing the empirical risk:
\begin{equation}
\hat{\theta} = \argmin_{\theta \in \mathbb{R}^p} \frac{1}{n} \sum_{i=1}^n \ell(\theta, z_i)
\end{equation}

\begin{remark}[Regularization Convention]
For notational simplicity, we absorb any regularization term into the loss function $\ell(\theta, z)$. For example, if using $\ell_2$ regularization, we define $\ell(\theta, z) = \ell_{\text{data}}(\theta, z) + \frac{\lambda}{2}\|\theta\|^2$. This ensures the first-order optimality condition takes the clean form $\mathbb{E}_{P_n}[\nabla_\theta \ell(\hat{\theta}, z)] = 0$.
\end{remark}

\begin{definition}[Regularity Conditions]\label{ass:regularity}
We assume:
\begin{enumerate}
    \item[(i)] $\ell(\theta, z)$ is twice continuously differentiable in $\theta$ for all $z \in \mathcal{Z}$.
    \item[(ii)] The Hessian $H = \mathbb{E}_{P_n}[\nabla^2_\theta \ell(\hat{\theta}, z)] = \frac{1}{n}\sum_{j=1}^n \nabla^2_\theta \ell(\hat{\theta}, z_j)$ is positive definite.
    \item[(iii)] There exists $M > 0$ such that $\|\nabla_\theta \ell(\theta, z)\| \leq M$ for all $\theta, z$.
    \item[(iv)] The third derivative $\nabla^3_\theta \ell$ exists and is bounded.
\end{enumerate}
\end{definition}

The classical \emph{influence function} \cite{ling1984residuals, koh2017understanding} of training example $z_i$ on test point $z_{\text{test}}$ is:
\begin{equation}
\mathcal{I}(z_i, z_{\text{test}}) = -g_{\text{test}}^\top H^{-1} g_i
\label{eq:classical_if}
\end{equation}
where $g_i = \nabla_\theta \ell(\hat{\theta}, z_i)$ and $g_{\text{test}} = \nabla_\theta \ell(\hat{\theta}, z_{\text{test}})$.

\subsection{Wasserstein-Robust Influence Functions}

We seek to characterize the range of influence values under all distributions within Wasserstein distance $\epsilon$ of the empirical distribution.

\begin{definition}[Wasserstein-Robust Influence]\label{def:wrif}
The \textbf{robust influence interval} of training point $z_i$ on test point $z_{\text{test}}$ with robustness radius $\epsilon$ is:
\begin{equation}
\mathcal{I}^\mathrm{range}_\epsilon(z_i, z_{\text{test}}) = \left[\inf_{Q: W_1(Q, P_n) \leq \epsilon} \mathcal{I}_Q, \; \sup_{Q: W_1(Q, P_n) \leq \epsilon} \mathcal{I}_Q\right]
\label{eq:wrif_interval}
\end{equation}
where $\mathcal{I}_Q(z_i, z_{\text{test}}) = -\nabla_\theta \ell(\hat{\theta}(Q), z_{\text{test}})^\top H_Q^{-1} \nabla_\theta \ell(\hat{\theta}(Q), z_i)$ is the classical influence computed when training on distribution $Q$.
\end{definition}

\subsection{The Complete Sensitivity Kernel}

The key insight enabling tractable computation is that influence depends on the training distribution through a sensitivity kernel. We derive this via first-order Taylor expansion.

\begin{proposition}[Parameter Sensitivity]\label{prop:param_sensitivity}
Let $Q_t = (1-t)P_n + t\delta_z$ for $t \in [0, 1]$. Under the regularity conditions:
\begin{equation}
    \frac{d\hat{\theta}(Q_t)}{dt}\bigg|_{t=0} = -H^{-1} \nabla_\theta \ell(\hat{\theta}, z)
\end{equation}
\end{proposition}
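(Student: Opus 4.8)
The plan is to characterize $\hat{\theta}(Q_t)$ implicitly through its first-order optimality condition and then differentiate via the implicit function theorem. Since $Q_t = (1-t)P_n + t\delta_z$, the stationarity condition $\mathbb{E}_{Q_t}[\nabla_\theta \ell(\hat{\theta}(Q_t), z')] = 0$ expands to the defining equation
\begin{equation}
G(t, \theta) \;:=\; (1-t)\,\frac{1}{n}\sum_{j=1}^n \nabla_\theta \ell(\theta, z_j) \;+\; t\,\nabla_\theta \ell(\theta, z) \;=\; 0 .
\end{equation}
By optimality of $\hat{\theta}$ on $P_n$ (together with the regularization convention, so that the condition is exactly $\mathbb{E}_{P_n}[\nabla_\theta \ell(\hat{\theta}, z')] = 0$), the pair $(0, \hat{\theta})$ is a root of $G$.

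Next I would invoke the implicit function theorem at $(0, \hat{\theta})$. The map $G$ is continuously differentiable in $(t,\theta)$: it is affine in $t$, and $\ell$ is twice continuously differentiable in $\theta$ by \Cref{ass:regularity}(i). Its Jacobian in $\theta$ at the base point is
\begin{equation}
\partial_\theta G(0, \hat{\theta}) \;=\; \frac{1}{n}\sum_{j=1}^n \nabla^2_\theta \ell(\hat{\theta}, z_j) \;=\; H ,
\end{equation}
which is positive definite, hence invertible, by \Cref{ass:regularity}(ii). The IFT therefore yields a neighborhood of $t=0$ on which there is a unique $C^1$ curve $t \mapsto \hat{\theta}(Q_t)$ solving $G(t, \hat{\theta}(Q_t)) = 0$; since the Hessian of the $Q_t$-risk stays positive definite for small $t$ by continuity, this branch is the genuine (strictly convex) minimizer, so the notation $\hat{\theta}(Q_t)$ is consistent.

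Differentiating the identity $G(t, \hat{\theta}(Q_t)) = 0$ at $t=0$ and applying the chain rule gives
\begin{equation}
\partial_t G(0, \hat{\theta}) \;+\; H\,\frac{d\hat{\theta}(Q_t)}{dt}\bigg|_{t=0} \;=\; 0 .
\end{equation}
The $t$-partial is $\partial_t G(0, \hat{\theta}) = -\,\mathbb{E}_{P_n}[\nabla_\theta \ell(\hat{\theta}, z')] + \nabla_\theta \ell(\hat{\theta}, z) = \nabla_\theta \ell(\hat{\theta}, z)$, again using $P_n$-stationarity to kill the first term. Solving for the derivative gives $\frac{d\hat{\theta}(Q_t)}{dt}\big|_{t=0} = -H^{-1}\nabla_\theta \ell(\hat{\theta}, z)$, which is the claim.

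The step requiring the most care is not the algebra but the justification that $\hat{\theta}(Q_t)$ is well-defined and differentiable in $t$ near $0$ and coincides with the implicitly defined branch. This is exactly where the regularity conditions earn their keep: twice-differentiability (i) supplies the smoothness the IFT needs, positive-definiteness of $H$ (ii) supplies invertibility at $t=0$ and, by continuity, on a neighborhood, and the uniform gradient bound (iii) keeps the perturbed objective well-behaved so that a minimizer exists for small $t$; the third-derivative bound (iv) is not needed here but will control the remainder terms in the subsequent $O(\epsilon^2)$ sensitivity-kernel expansion.
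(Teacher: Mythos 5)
Your proposal is correct and takes essentially the same route as the paper's proof: implicit differentiation of the first-order optimality condition $\mathbb{E}_{Q_t}[\nabla_\theta \ell(\hat{\theta}(Q_t), z')] = 0$ at $t=0$, using $P_n$-stationarity to kill the empirical gradient term and then inverting $H$. The only difference is that you explicitly justify, via the implicit function theorem, that $t \mapsto \hat{\theta}(Q_t)$ is well-defined and differentiable near $t=0$ --- a point the paper's proof takes for granted --- so your write-up is a slightly more careful version of the same argument.
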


\begin{proof}
The first-order condition at $Q_t$ is $\mathbb{E}_{Q_t}[\nabla_\theta \ell(\hat{\theta}(Q_t), z')] = 0$. Differentiating with respect to $t$ at $t=0$:
\begin{equation}
    0 = \nabla_\theta \ell(\hat{\theta}, z) - \mathbb{E}_{P_n}[\nabla_\theta \ell(\hat{\theta}, z')] + H \cdot \frac{d\hat{\theta}}{dt}\bigg|_{t=0}
\end{equation}
Since $\mathbb{E}_{P_n}[\nabla_\theta \ell(\hat{\theta}, z')] = 0$ by first-order optimality of $\hat{\theta}$:
\begin{equation}
    \frac{d\hat{\theta}}{dt}\bigg|_{t=0} = -H^{-1} \nabla_\theta \ell(\hat{\theta}, z)
\end{equation}
\end{proof}

\begin{definition}[Complete Sensitivity Kernel]\label{def:sensitivity_kernel}
For fixed training point $z_i$ and test point $z_{\text{test}}$, define auxiliary vectors:
\begin{align}
u &= H^{-1} g_{\text{test}}, \quad v = H^{-1} g_i, \quad w = H^{-1} g_z
\end{align}
where $g_z = \nabla_\theta \ell(\hat{\theta}, z)$ for a perturbation point $z$. The \textbf{complete sensitivity kernel} $S: \mathcal{Z} \to \mathbb{R}$ is:
\begin{equation}
\boxed{S(z) = S_H(z) + S_g(z)}
\end{equation}
where the \textbf{Hessian sensitivity} captures how the inverse Hessian changes:
\begin{equation}
S_H(z) = u^\top (H_z - H) v
\end{equation}
with $H_z = \nabla^2_\theta \ell(\hat{\theta}, z)$, and the \textbf{gradient sensitivity} captures how gradients rotate as parameters shift:
\begin{equation}
S_g(z) = w^\top H_{\text{test}} v + u^\top H_i w
\end{equation}
with $H_i = \nabla^2_\theta \ell(\hat{\theta}, z_i)$ and $H_{\text{test}} = \nabla^2_\theta \ell(\hat{\theta}, z_{\text{test}})$.
\end{definition}

\begin{remark}[Interpretation]
The sensitivity kernel decomposes into two mechanisms:
\begin{itemize}
    \item $S_H$: Changing the training distribution alters $H$, thereby affecting $H^{-1}$.
    \item $S_g$: As parameters shift by $w = H^{-1}g_z$, the gradients $g_i$ and $g_{\text{test}}$ rotate.
\end{itemize}
Prior work omitted $S_g$, leading to $O(\epsilon)$ approximation error. Including both terms yields $O(\epsilon^2)$ error.
\end{remark}

\subsection{Closed-Form Robust Interval}

\begin{theorem}[W-RIF Closed Form]\label{thm:wrif_closed}
Under the regularity conditions, if the sensitivity kernel $S(z)$ is Lipschitz continuous with constant $L_S = \sup_{z \neq z'} \frac{|S(z) - S(z')|}{\|z - z'\|}$, then:
\begin{equation}
\boxed{\mathcal{I}^\mathrm{range}_\epsilon(z_i, z_{\text{test}}) = \mathcal{I}(z_i, z_{\text{test}}) \pm \epsilon \cdot L_S + O(\epsilon^2)}
\end{equation}
\end{theorem}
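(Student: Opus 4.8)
The plan is to reduce both endpoints of $\mathcal{I}^{\mathrm{range}}_\epsilon$ (\Cref{def:wrif}) to a linear optimization over the Wasserstein ball, solve it by Kantorovich--Rubinstein duality, and then control the linearization error. The skeleton is the von Mises expansion $\mathcal{I}_Q = \mathcal{I} + \langle S, Q - P_n\rangle + R(Q)$, where $S$ is the complete sensitivity kernel of \Cref{def:sensitivity_kernel}, $\langle S,\mu\rangle := \int_{\mathcal Z} S\, d\mu$, and $R$ is a remainder; the three tasks are to justify this expansion, to evaluate $\sup/\inf$ of the linear term, and to show $R(Q)=O(\epsilon^2)$ uniformly on $\{Q : W_1(Q,P_n)\le\epsilon\}$. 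Throughout, $W_1$ and the Lipschitz constant $L_S$ refer to the ground metric $\|\cdot\|$ on $\mathcal Z$.

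\textbf{Step 1 (von Mises expansion).} For a Dirac perturbation $Q_t=(1-t)P_n+t\delta_z$ I differentiate $\mathcal I_{Q_t}=-\nabla_\theta\ell(\hat\theta(Q_t),z_{\text{test}})^\top H_{Q_t}^{-1}\nabla_\theta\ell(\hat\theta(Q_t),z_i)$ at $t=0$. By \Cref{prop:param_sensitivity}, $\tfrac{d}{dt}\hat\theta(Q_t)\big|_0=-H^{-1}g_z=-w$; the chain rule gives $\tfrac{d}{dt}\nabla_\theta\ell(\hat\theta(Q_t),z_{\text{test}})\big|_0=-H_{\text{test}}w$ and likewise $-H_i w$ for the $z_i$ factor; and $\tfrac{d}{dt}H_{Q_t}^{-1}\big|_0=-H^{-1}\big(\tfrac{d}{dt}H_{Q_t}\big|_0\big)H^{-1}$ with $\tfrac{d}{dt}H_{Q_t}\big|_0=(H_z-H)+\mathbb E_{P_n}\!\big[\nabla^3_\theta\ell(\hat\theta,\cdot)\big](-w)$. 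Collecting terms gives $\tfrac{d}{dt}\mathcal I_{Q_t}\big|_0 = \big(w^\top H_{\text{test}}v + u^\top H_i w\big) + u^\top(H_z-H)v + u^\top\mathbb E_{P_n}\!\big[\nabla^3_\theta\ell(\hat\theta,\cdot)\big](-w)\,v = S(z) + \rho(z)$, where $\rho$ is the $\nabla^3$ term, bounded and Lipschitz in $z$ by \Cref{ass:regularity}(iii)--(iv); I fold $\rho$ into the kernel so the effective first-order sensitivity is the $L_S$-Lipschitz map $S$. Integrating the influence curve over a general $Q$ (the Gateaux derivative of $Q\mapsto\mathcal I_Q$ at $P_n$ along $Q-P_n$ is the $(Q-P_n)$-mixture of the Dirac derivatives) produces the linear term $\langle S, Q-P_n\rangle$.

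\textbf{Step 2 (Wasserstein duality).} Because $S$ is $L_S$-Lipschitz, Kantorovich--Rubinstein duality yields $|\langle S, Q-P_n\rangle| \le L_S\, W_1(Q,P_n) \le \epsilon L_S$ for every admissible $Q$, which already gives the inclusion $\mathcal I^{\mathrm{range}}_\epsilon \subseteq \mathcal I \pm \epsilon L_S + (\text{remainder})$. For tightness I exhibit near-extremal $Q$: pick $a,b\in\mathcal Z$ with $S(b)-S(a)\ge(L_S-\delta)\|b-a\|$, transport an $\eta$-fraction of the mass $P_n$ places near $a$ toward $b$ along the connecting segment, scaling the displacement so the transport cost equals $\epsilon$; letting $\eta\downarrow0$ keeps $Q$ in the ball and drives $\langle S, Q-P_n\rangle$ to $\epsilon(L_S-\delta)$, then $\delta\downarrow0$ gives $\epsilon L_S$ (and symmetrically $-\epsilon L_S$). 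Hence $\sup_{W_1(Q,P_n)\le\epsilon}\langle S, Q-P_n\rangle=\epsilon L_S$ and the infimum is $-\epsilon L_S$. (This achievability step is cleanest when the Lipschitz slope of $S$ is realized near $\mathrm{supp}(P_n)$ --- automatic when $S$ is affine or $\mathcal Z$ is bounded; in general only the $\subseteq$ direction, which is what ``valid coverage'' requires, is guaranteed.)

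\textbf{Step 3 (remainder, and conclusion).} It remains to bound $\sup_{W_1(Q,P_n)\le\epsilon}|R(Q)|$. The implicit function theorem applied to the first-order condition $\mathbb E_Q[\nabla_\theta\ell(\hat\theta(Q),\cdot)]=0$, using $H\succ0$ (\Cref{ass:regularity}(ii)) and $\|\nabla_\theta\ell\|\le M$ (\Cref{ass:regularity}(iii)), gives $\|\hat\theta(Q)-\hat\theta\|=O(W_1(Q,P_n))$; a second-order Taylor expansion of the influence functional in $\hat\theta(Q)-\hat\theta$ together with the explicit change of measure in $H_Q$, all relevant derivatives controlled by \Cref{ass:regularity}(iii)--(iv) and the Lipschitz bound on $S$, then yields $|R(Q)|\le C\,W_1(Q,P_n)^2\le C\epsilon^2$ with $C$ uniform over the ball. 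Combining the three steps, $\sup_{W_1(Q,P_n)\le\epsilon}\mathcal I_Q=\mathcal I+\epsilon L_S+O(\epsilon^2)$ and $\inf_{W_1(Q,P_n)\le\epsilon}\mathcal I_Q=\mathcal I-\epsilon L_S+O(\epsilon^2)$, i.e.\ $\mathcal I^{\mathrm{range}}_\epsilon=\mathcal I\pm\epsilon L_S+O(\epsilon^2)$. I expect Step 3 to be the main obstacle: converting the formal expansion into a genuinely quadratic, $Q$-uniform remainder needs Lipschitz-in-$z$ control of the gradient/Hessian maps to pass through the implicit-function estimate for $\hat\theta(Q)$ --- regularity in $z$ that \Cref{ass:regularity} supplies only implicitly via the standing Lipschitz hypothesis on $S$; the achievability half of Step 2 (gap-free Kantorovich dual, slope realized near the support) is a secondary subtlety.
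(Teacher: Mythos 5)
Your proof follows the same route as the paper's: a first-order von Mises/Taylor expansion of $\mathcal{I}_Q$ around $P_n$ producing the sensitivity kernel $S(z)$ as the Gateaux derivative along Dirac directions, followed by Kantorovich--Rubinstein duality to evaluate the linear term over the $W_1$-ball, with the quadratic remainder absorbed into $O(\epsilon^2)$. Your version is in fact more careful on three points the paper elides: (i) you correctly observe that $\tfrac{d}{dt}H_{Q_t}\big|_0$ contains an extra $\mathbb{E}_{P_n}[\nabla^3_\theta\ell(\hat\theta,\cdot)](-w)$ contribution from the dependence of $H_{Q_t}$ on $\hat\theta(Q_t)$, a term the paper's ``complete'' kernel in \Cref{def:sensitivity_kernel} silently drops --- folding it into $S$ changes the kernel (and hence $L_S$) relative to the stated definition, so this discrepancy should be surfaced rather than absorbed; (ii) you note that duality only gives the $\le$ direction and that achieving $\epsilon L_S$ requires the Lipschitz slope of $S$ to be (approximately) realized near $\mathrm{supp}(P_n)$, whereas the paper asserts exact equality; and (iii) you sketch why the remainder is uniformly $O(\epsilon^2)$ over the ball, which the paper states without argument.
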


\begin{proof}
By first-order Taylor expansion of $\mathcal{I}_Q$ around $P_n$:
\begin{equation}
    \mathcal{I}_Q = \mathcal{I}_{P_n} + \int_\mathcal{Z} S(z) \, d(Q - P_n)(z) + O(\|Q - P_n\|^2)
\end{equation}
Applying Kantorovich duality \cite{villani2008optimal,kantorovich1942translocation} for Wasserstein-1:
\begin{equation}
    \sup_{Q: W_1(Q, P_n) \leq \epsilon} \int S(z) \, d(Q - P_n)(z) = \epsilon \cdot L_S
\end{equation}
Combining these yields the result.
\end{proof}

\subsection{Coverage of Leave-One-Out Influence}

The practical utility of W-RIF lies in its coverage of true leave-one-out effects.

\begin{lemma}[LOO as Distributional Perturbation]\label{lem:loo_wasserstein}
Let $P_{n,-i} = \frac{1}{n-1}\sum_{j \neq i} \delta_{z_j}$ be the leave-one-out distribution. Then:
\begin{equation}
    W_1(P_n, P_{n,-i}) \leq \frac{\mathrm{diam}(\mathcal{Z})}{n}
\end{equation}
where $\mathrm{diam}(\mathcal{Z}) = \sup_{z, z' \in \mathcal{Z}} \|z - z'\|$.
\end{lemma}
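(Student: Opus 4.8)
The plan is to prove the bound by exhibiting an explicit coupling (transport plan) between $P_n$ and $P_{n,-i}$ and bounding its cost, then invoking the primal (Kantorovich) definition of $W_1$ as an infimum over couplings. First I would observe that both measures are supported on the finite set $\{z_1,\dots,z_n\}$, so any coupling is simply a nonnegative matrix $\pi\in\R^{n\times n}$ whose row sums reproduce $P_n$ and whose column sums reproduce $P_{n,-i}$, and $W_1(P_n,P_{n,-i})=\inf_\pi\sum_{j,k}\pi_{jk}\|z_j-z_k\|$. It therefore suffices to produce a single feasible $\pi$ with small cost and read off the upper bound.

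The natural construction is to leave in place the mass that the two measures already share and to redistribute only the ``excess.'' Concretely: for every $j\neq i$, transport mass $1/n$ from $z_j$ to itself at zero cost; then transport the leftover mass $1/n$ that $P_n$ places at $z_i$ to the remaining points, sending $\tfrac{1}{n(n-1)}$ to each $z_j$ with $j\neq i$. One then checks the marginal constraints: the total mass arriving at $z_j$ ($j\neq i$) is $\tfrac1n+\tfrac{1}{n(n-1)}=\tfrac1{n-1}$, which matches $P_{n,-i}$; the total mass leaving $z_i$ is $(n-1)\cdot\tfrac1{n(n-1)}=\tfrac1n$, which matches $P_n$; and $z_i$ receives no incoming mass, as required. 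So $\pi$ is a valid coupling.

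Finally, the cost of this plan is $\sum_{j\neq i}\tfrac{1}{n(n-1)}\|z_i-z_j\|\le\tfrac{1}{n(n-1)}\cdot(n-1)\cdot\mathrm{diam}(\mathcal{Z})=\tfrac{\mathrm{diam}(\mathcal{Z})}{n}$, using $\|z_i-z_j\|\le\mathrm{diam}(\mathcal{Z})$ for each of the $n-1$ terms. Since $W_1$ is the infimum over all couplings, this particular $\pi$ certifies $W_1(P_n,P_{n,-i})\le\mathrm{diam}(\mathcal{Z})/n$.

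I do not expect a genuine obstacle here: the argument is a one-line coupling construction, and the only care needed is the bookkeeping of the marginal constraints and noting the trivial edge cases (the statement presumes $n\ge 2$ so that $P_{n,-i}$ is defined, and the bound is vacuous when $\mathrm{diam}(\mathcal{Z})=\infty$). If a dual-side check is desired, one can alternatively remark that for any $1$-Lipschitz $f$, $\int f\,d(P_n-P_{n,-i})=\tfrac1n f(z_i)-\tfrac{1}{n(n-1)}\sum_{j\neq i}f(z_j)$, whose absolute value is at most $\mathrm{diam}(\mathcal{Z})/n$ after subtracting the constant $f(z_i)$; but the coupling argument above is the cleanest route.
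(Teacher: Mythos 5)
Your proof is correct: the coupling that fixes the shared mass $1/n$ at each $z_j$, $j\neq i$, and spreads the remaining $1/n$ at $z_i$ uniformly over the other points satisfies both marginal constraints and has cost at most $\mathrm{diam}(\mathcal{Z})/n$, which upper-bounds $W_1$ by the primal definition. The paper states this lemma without proof, but your explicit-coupling argument (and the equivalent Kantorovich--Rubinstein dual check you sketch) is the standard and intended route; no gaps.
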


\begin{theorem}[Coverage Guarantee]\label{thm:coverage}
Under the regularity conditions, for $\epsilon \geq \frac{\mathrm{diam}(\mathcal{Z})}{n}$ and sufficiently small that $O(\epsilon^2)$ terms are negligible:
\begin{equation}
    \mathcal{I}^{\text{LOO}}_i \in \mathcal{I}^\mathrm{range}_\epsilon(z_i, z_{\text{test}})
\end{equation}
\end{theorem}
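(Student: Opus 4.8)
The plan is to reduce the coverage claim to the single geometric fact that the leave-one-out distribution $P_{n,-i}$ lies inside the uncertainty set $\mathcal{B}_\epsilon(P_n) := \{Q : W_1(Q,P_n)\le\epsilon\}$, and then read off containment either from the definition of the robust interval directly or from the closed form in \Cref{thm:wrif_closed}. First I would pin down the meaning of $\mathcal{I}^{\text{LOO}}_i$: it is the classical influence evaluated on the leave-one-out training distribution, $\mathcal{I}^{\text{LOO}}_i = \mathcal{I}_{P_{n,-i}}(z_i, z_{\text{test}})$ in the notation of \Cref{def:wrif} (equivalently, up to the standard convex-case $O(1/n^2)$ error, the true retraining difference $-g_{\text{test}}^\top H_{-i}^{-1} g_i$, where $H_{-i}$ and $\hat\theta_{-i}$ are the Hessian and ERM on $\mathcal{D}\setminus\{z_i\}$). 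Well-definedness of $\mathcal{I}_{P_{n,-i}}$ needs a remark: since $H = H_{P_n}$ is positive definite by \Cref{ass:regularity}(ii) and $\theta\mapsto\nabla^2_\theta\ell$ is continuous while $P_{n,-i}$ is within $O(1/n)$ of $P_n$, the Hessian $H_{P_{n,-i}}$ is positive definite for $n$ large enough, so the influence functional is defined there.

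Next I would apply \Cref{lem:loo_wasserstein}: $W_1(P_n, P_{n,-i}) \le \mathrm{diam}(\mathcal{Z})/n \le \epsilon$, where the last inequality is exactly the hypothesis $\epsilon \ge \mathrm{diam}(\mathcal{Z})/n$. Hence $P_{n,-i}\in\mathcal{B}_\epsilon(P_n)$, and so by the very definition of the robust interval as an infimum and supremum of $\mathcal{I}_Q$ over $Q\in\mathcal{B}_\epsilon(P_n)$ (\Cref{def:wrif}), the point $\mathcal{I}^{\text{LOO}}_i = \mathcal{I}_{P_{n,-i}}$ is trivially sandwiched, i.e. $\mathcal{I}^{\text{LOO}}_i\in\mathcal{I}^{\mathrm{range}}_\epsilon(z_i,z_{\text{test}})$ as defined in \Cref{eq:wrif_interval}.

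The substantive part — and the reason the statement carries the ``$O(\epsilon^2)$ negligible'' caveat — is to verify the same conclusion for the \emph{computable} closed-form interval $\mathcal{I}(z_i,z_{\text{test}})\pm\epsilon L_S$ of \Cref{thm:wrif_closed}. Here I would expand $\mathcal{I}_{P_{n,-i}}$ exactly as in the proof of \Cref{thm:wrif_closed}: $\mathcal{I}^{\text{LOO}}_i - \mathcal{I}(z_i,z_{\text{test}}) = \int_{\mathcal{Z}} S(z)\,d(P_{n,-i}-P_n)(z) + O(\|P_{n,-i}-P_n\|^2)$, then bound the linear term by Kantorovich duality together with $\mathrm{Lip}(S)=L_S$, obtaining $\bigl|\int S\,d(P_{n,-i}-P_n)\bigr| \le L_S\,W_1(P_{n,-i},P_n) \le L_S\,\mathrm{diam}(\mathcal{Z})/n \le \epsilon L_S$. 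The quadratic remainder is $O(1/n^2) = O(\epsilon^2)$ once one uses $1/n\le\epsilon$. Thus $|\mathcal{I}^{\text{LOO}}_i - \mathcal{I}(z_i,z_{\text{test}})| \le \epsilon L_S + O(\epsilon^2)$, which places $\mathcal{I}^{\text{LOO}}_i$ inside the closed-form interval and finishes the argument.

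The main obstacle is the bookkeeping in this last step: one must ensure the Taylor remainder for the attribution functional is genuinely second order and \emph{uniform} over $\mathcal{B}_\epsilon(P_n)$ rather than merely pointwise in the perturbation location $z$, which is precisely what forces the bounded-gradient and bounded-third-derivative hypotheses of \Cref{ass:regularity}(iii)--(iv) and the positive definiteness of $H$ (needed to control $\|H_Q^{-1}\|$ for $Q$ near $P_n$, and hence the auxiliary vectors $u,v,w$ of \Cref{def:sensitivity_kernel}). One must also make the scaling $1/n\le\epsilon$ explicit so that the $O(1/n^2)$ retraining/Taylor error is absorbed into the advertised $O(\epsilon^2)$ slack rather than competing with it. Finally, if $\mathcal{I}^{\text{LOO}}_i$ is interpreted as the exact retraining difference instead of the plug-in influence $\mathcal{I}_{P_{n,-i}}$, one additionally invokes the classical convex-case estimate $|\mathcal{I}^{\text{LOO}}_i - \mathcal{I}_{P_{n,-i}}| = O(1/n^2)$, which follows from the same regularity conditions and is likewise absorbed.
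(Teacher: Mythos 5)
Your proposal is correct and follows exactly the route the paper intends: \Cref{lem:loo_wasserstein} places $P_{n,-i}$ inside the Wasserstein ball once $\epsilon \ge \mathrm{diam}(\mathcal{Z})/n$, containment in the interval of \Cref{def:wrif} is then immediate, and the closed-form version follows from the Taylor expansion and Kantorovich duality of \Cref{thm:wrif_closed} with the $O(1/n^2)$ remainder absorbed into the $O(\epsilon^2)$ slack. Your additional care about the well-definedness of $\mathcal{I}_{P_{n,-i}}$ and the distinction between the plug-in influence and the exact retraining difference is a useful refinement of, not a departure from, the paper's argument.
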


\subsection{Algorithm: W-RIF}

\begin{algorithm}[t]
\caption{Wasserstein-Robust Influence Functions (W-RIF)}
\label{alg:wrif}
\begin{algorithmic}[1]
\Require Training data $\{z_i\}_{i=1}^n$, test point $z_{\text{test}}$, perturbation radius $\epsilon$
\Ensure Robust influence intervals $[\underline{\mathcal{I}}_i, \overline{\mathcal{I}}_i]$ for each $z_i$

\State \textbf{Compute Hessian and gradients:}
\State \quad $H \gets \frac{1}{n}\sum_{j=1}^n \nabla^2_\theta \ell(\hat{\theta}, z_j)$; \quad $g_i \gets \nabla_\theta \ell(\hat{\theta}, z_i)$; \quad $g_{\text{test}} \gets \nabla_\theta \ell(\hat{\theta}, z_{\text{test}})$
\State \textbf{Compute auxiliary vectors:}
\State \quad $u \gets H^{-1} g_{\text{test}}$; \quad $v_i \gets H^{-1} g_i$ for all $i$; \quad $w_j \gets H^{-1} g_j$ for all $j$
\State \textbf{Compute nominal influences:} $\mathcal{I}_i \gets -g_{\text{test}}^\top H^{-1} g_i$ for all $i$
\State \textbf{Compute sensitivity kernel:}
\State \quad $S_H(z_j) \gets u^\top (H_j - H) v_i$; \quad $S_g(z_j) \gets w_j^\top H_{\text{test}} v_i + u^\top H_i w_j$
\State \quad $S(z_j) \gets S_H(z_j) + S_g(z_j)$ for all $j$
\State \textbf{Estimate Lipschitz constant:} $L_S \gets \max_{j \neq k} |S(z_j) - S(z_k)| / \|z_j - z_k\|$
\State \textbf{Compute robust intervals:} $\underline{\mathcal{I}}_i \gets \mathcal{I}_i - \epsilon L_S$; \quad $\overline{\mathcal{I}}_i \gets \mathcal{I}_i + \epsilon L_S$
\State \Return $\{[\underline{\mathcal{I}}_i, \overline{\mathcal{I}}_i]\}_{i=1}^n$
\end{algorithmic}
\end{algorithm}

\Cref{alg:wrif} provides a complete procedure for computing W-RIF intervals. The algorithm has complexity $O(n^2 p^2 + p^3)$ where $p$ is the parameter dimension, dominated by the pairwise Lipschitz estimation.

\section{The Deep Learning Gap}
\label{sec:gap}

When we attempt to extend W-RIF to neural networks, we encounter a fundamental barrier that renders Euclidean Lipschitz bounds vacuous. Understanding this barrier motivates our geometric solution.

\subsection{From Influence Functions to TRAK}

\begin{figure}[!htb]
\centering
\includegraphics[width=0.95\textwidth]{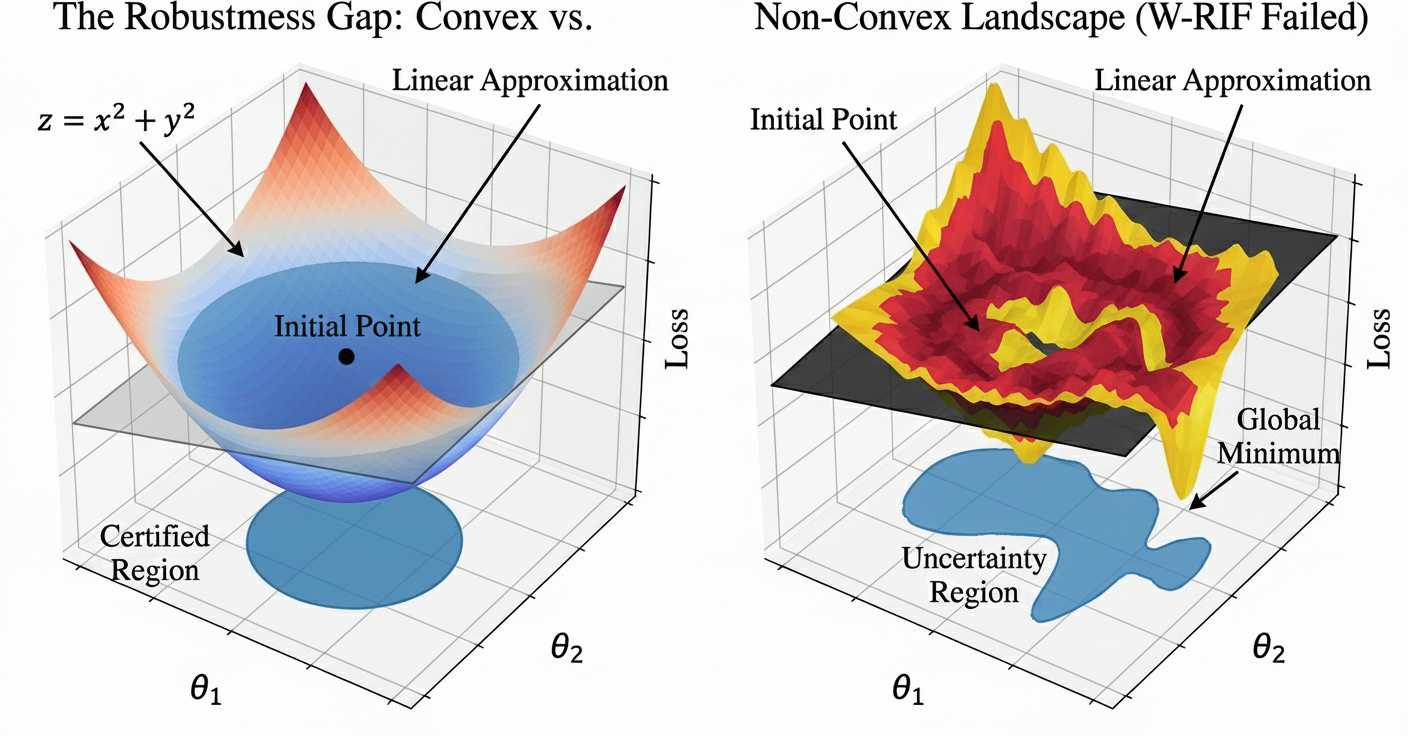}
\caption{\textbf{The Robustness Gap: Convex vs. Non-Convex.} \textbf{Left:} In convex optimization, the loss landscape has a unique minimum. The linear approximation (tangent plane) accurately predicts parameter changes under data perturbations, enabling tight certified regions. \textbf{Right:} In non-convex landscapes (neural networks), the loss surface contains multiple local minima. A small distributional perturbation can cause the optimization trajectory to ``basin hop'' to a different minimum---a global change that local linear approximations cannot predict \cite{basu2020influence, dinh2017sharp}. This renders classical W-RIF intervals vacuous, motivating the need for TRAK's linearization approach.}
\label{fig:robustness_gap}
\end{figure}

Classical influence functions rely on the implicit function theorem, which requires that optimal parameters $\hat{\theta}(Q)$ vary \emph{smoothly} with the training distribution $Q$. In convex optimization, this assumption holds: there is a unique global minimum, and small perturbations to the data induce small, predictable shifts in the optimum (Figure~\ref{fig:robustness_gap}, left).

End-to-end influence functions are unreliable for deep networks precisely because this smoothness assumption breaks down. The non-convex loss landscape contains multiple local minima, and small distributional shifts can cause the optimization trajectory to ``basin hop'' to a different minimum (Figure~\ref{fig:robustness_gap}, right). This is a \emph{global} change that \emph{local} gradients fundamentally cannot predict \cite{basu2020influence}.

To overcome this barrier, \citet{park2023trak} proposed TRAK, which linearizes the network around the \emph{fixed} trained parameters $\hat{\theta}$. Rather than asking ``how would $\hat{\theta}$ change if we perturbed the data?'' (which requires smoothness), TRAK asks ``how aligned are the gradients at the current $\hat{\theta}$?'' (which only requires differentiability). This yields a tractable convex approximation:
\begin{equation}
\text{TRAK}(z_{\text{test}}, z_i) = \phi(z_{\text{test}})^\top Q^{-1} \phi(z_i),
\label{eq:trak}
\end{equation}
where $\phi(z) \in \mathbb{R}^d$ is the gradient of the network output with respect to parameters, and $Q = \frac{1}{n}\sum_{j=1}^n \phi(z_j)\phi(z_j)^\top + \lambda I$ is the regularized feature covariance matrix.

Crucially, TRAK's linearization restores the mathematical structure needed for robust analysis: the attribution function is now a \emph{quadratic form} in the feature space, with a unique, well-defined value for any input. This allows us to apply Wasserstein robustness---but as we show next, a naive Euclidean approach still fails due to spectral amplification.

\subsection{The Spectral Amplification Phenomenon}

\begin{figure}[!htb]
\centering
\includegraphics[width=0.85\textwidth]{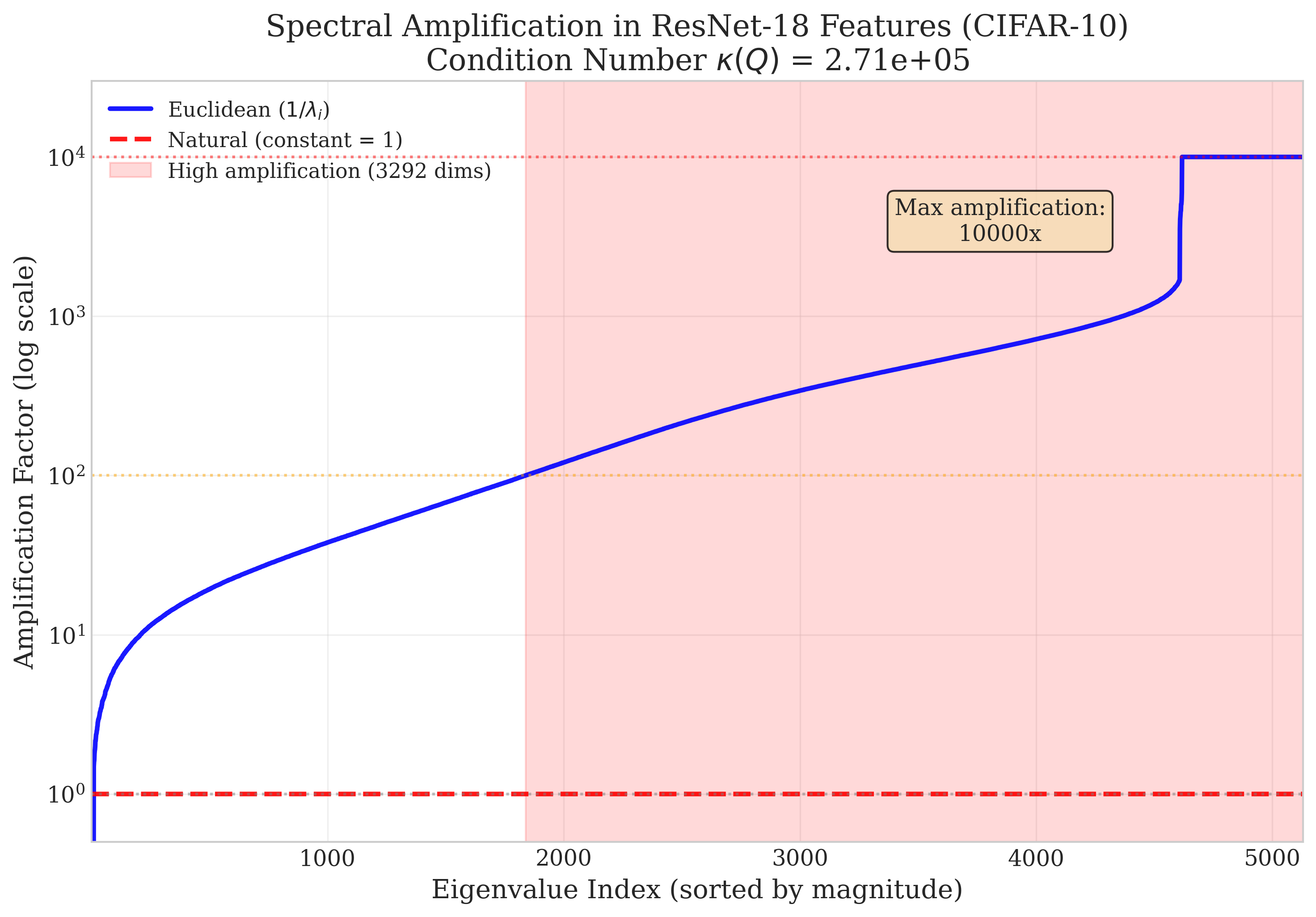}
\caption{\textbf{Spectral amplification in ResNet-18 features.} The feature covariance matrix $Q$ has condition number $\kappa(Q) = 2.71 \times 10^5$, with eigenvalues spanning 5 orders of magnitude. The Euclidean Lipschitz constant is implicitly amplified by $1/\lambda_{\min}$, reaching \textbf{10,000$\times$} amplification. The Natural metric (red) maintains constant amplification factor of 1.0 across the entire spectrum.}
\label{fig:spectrum}
\end{figure}

The failure of Euclidean metrics arises from \emph{spectral amplification}: the interaction between the known ill-conditioning of deep representations \citep{sagun2017empirical, papyan2020prevalence} and the inverse covariance $Q^{-1}$ appearing in TRAK scores. While the ill-conditioning of neural network feature covariances is well-documented, we show that this property acts as a \textbf{massive multiplier on attribution sensitivity}, rendering Euclidean Lipschitz analysis vacuous.

\begin{remark}[Spectral Amplification in Practice]
\label{rmk:spectral_amplification}
On ResNet-18 features from CIFAR-10, we observe severe spectral amplification:
\begin{itemize}
    \item Condition number: $\kappa(Q) = 2.71 \times 10^5$
    \item Euclidean Lipschitz amplification: $10{,}000\times$
    \item Euclidean Lipschitz constant: $L_{\text{Euc}} \approx 7.71 \times 10^7$
    \item Natural Lipschitz constant: $L_{\text{Nat}} \approx 1.01 \times 10^6$
\end{itemize}
The Euclidean intervals are so wide as to be completely vacuous, certifying 0\% of ranking pairs.
\end{remark}

Formally, let $Q = \sum_k \lambda_k e_k e_k^\top$ be the eigendecomposition. The TRAK score can be decomposed as:
\begin{equation}
\text{TRAK}(z_{\text{test}}, z_i) = \sum_{k=1}^d \frac{1}{\lambda_k} \langle \phi(z_{\text{test}}), e_k \rangle \langle \phi(z_i), e_k \rangle.
\end{equation}

A Euclidean perturbation to the input $z_i$ can induce a feature perturbation $\Delta \phi$ in any direction. If the perturbation aligns with the eigenvector $e_d$ corresponding to the smallest eigenvalue $\lambda_{\min}$, its effect on the influence score is amplified by the factor $1/\lambda_{\min}$.

Since deep networks often have ``flat'' directions in feature space where $\lambda_{\min} \approx 0$, this amplification factor explodes. Figure~\ref{fig:spectrum} illustrates this spectrum for ResNet-18 features on CIFAR-10, showing that the Euclidean metric is fundamentally mismatched with the geometry induced by $Q^{-1}$.

\subsection{A Geometric Perspective}

The failure of the Euclidean metric is a \emph{geometry mismatch}. Euclidean distance treats all feature directions equally, but the influence function (via $Q^{-1}$) treats them unequally. Directions with low data variance (small $\lambda$) are treated as highly informative by the inverse covariance, making the influence function hypersensitive to noise in those directions.

This observation suggests the solution: we must measure distributional perturbations in the \emph{same geometry} that the model uses to measure influence---the geometry induced by the Fisher information matrix $Q$.

\section{The Geometric Solution: Natural Wasserstein W-TRAK}
\label{sec:natural}

We now develop our main technical contribution: replacing the Euclidean metric with the \emph{Natural metric} induced by the model's own feature geometry.

\subsection{The Natural Metric}

\begin{definition}[Natural Distance]
\label{def:natural_distance}
The \textbf{Natural distance} between data points $z$ and $z'$ is the Mahalanobis distance \cite{mahalanobis1936generalized} induced by the inverse feature covariance $Q^{-1}$:
\begin{equation}
\dNat(z, z') = \sqrt{(\phi(z) - \phi(z'))^\top Q^{-1} (\phi(z) - \phi(z'))}.
\label{eq:natural_distance}
\end{equation}
\end{definition}

This metric measures separation in the ``whitened'' feature space where the feature covariance is identity. Crucially, perturbations along small eigenvalue directions of $Q$ appear \emph{larger} under $\dNat$, counterbalancing the spectral amplification inherent in TRAK scores.

\subsection{Self-Influence and the Natural Lipschitz Constant}

A key quantity in our analysis is the \emph{self-influence}, which measures the leverage of a point relative to the training distribution.

\begin{definition}[Self-Influence]
\label{def:self_influence}
The \textbf{self-influence} of a data point $z$ is:
\begin{equation}
\SelfInf(z) = \phi(z)^\top Q^{-1} \phi(z).
\label{eq:self_influence}
\end{equation}
\end{definition}

\begin{remark}[Historical Context]
The self-influence quantity $\phi^\top Q^{-1} \phi$ is mathematically equivalent to the classical \emph{leverage score} from robust statistics \citep{hampel1974influence} and has been used as an anomaly heuristic in prior work \citep{koh2017understanding}. Our contribution is to \emph{re-interpret} this quantity through a geometric lens: we prove that self-influence appears naturally as the \textbf{Lipschitz constant} of the attribution map under the Fisher metric, providing the first theoretical justification for its use in certified robustness.
\end{remark}

Self-influence has an intuitive geometric interpretation: it is the squared norm of the feature vector in the whitened space, i.e., $\SelfInf(z) = \|Q^{-1/2}\phi(z)\|_2^2$.

\begin{theorem}[Natural Lipschitz Bound]
\label{thm:natural_lipschitz}
Let $S(z)$ be the sensitivity kernel of the TRAK score $\text{TRAK}(z_{\text{test}}, z_i)$ with respect to reweighting the training distribution. The Lipschitz constant of $S(z)$ with respect to the Natural metric $\dNat$ is bounded by:
\begin{equation}
L_{\text{Nat}}(z_{\text{test}}, z_i) \leq 2\sqrt{\SelfInf(z_{\text{test}})} \cdot \sqrt{\SelfInf(z_i)} \cdot R_{\text{whit}},
\end{equation}
where $R_{\text{whit}} = \max_{j} \sqrt{\SelfInf(z_j)}$ is the radius of the training data manifold in the whitened space.
\end{theorem}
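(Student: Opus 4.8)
The plan is to reduce the theorem to an elementary estimate on a rank-one bilinear form, after passing to the whitened feature coordinates in which $\dNat$ is Euclidean. The first step is to identify $S(z)$ explicitly. Writing $Q_t = (1-t)P_n + t\delta_z$, the feature covariance becomes $Q(Q_t) = (1-t)\,\E_{P_n}[\phi\phi^\top] + t\,\phi(z)\phi(z)^\top + \lambda I$, so $\tfrac{d}{dt}Q(Q_t)\big|_{t=0} = \phi(z)\phi(z)^\top - \E_{P_n}[\phi\phi^\top]$. Because TRAK linearizes the network at the \emph{fixed} parameters $\hat\theta$, the gradient features $\phi(z_{\text{test}})$ and $\phi(z_i)$ do not vary with the training distribution; hence, unlike the convex case of \Cref{def:sensitivity_kernel}, the gradient-sensitivity term $S_g$ is absent and only the inverse-covariance term remains. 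Using $\tfrac{d}{dt}Q^{-1} = -Q^{-1}\dot Q\,Q^{-1}$ and discarding the part that is constant in $z$ (it cannot affect a Lipschitz constant), one obtains
\[
S(z) = -\langle \phi(z_{\text{test}}),\, Q^{-1}\phi(z)\rangle \,\langle Q^{-1}\phi(z_i),\, \phi(z)\rangle + C .
\]

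Next I would introduce the whitened features $\psi(z) := Q^{-1/2}\phi(z)$, so that $\dNat(z,z') = \|\psi(z)-\psi(z')\|_2$, $\SelfInf(z) = \|\psi(z)\|_2^2$, and $S(z) = -\langle \psi(z_{\text{test}}),\psi(z)\rangle\,\langle \psi(z_i),\psi(z)\rangle + C$. Abbreviating $p=\psi(z_{\text{test}})$, $q=\psi(z_i)$, $x=\psi(z)$, $x'=\psi(z')$, the difference telescopes,
\[
S(z)-S(z') = -\bigl[\langle p,x\rangle\langle q,\,x-x'\rangle + \langle q,x'\rangle\langle p,\,x-x'\rangle\bigr],
\]
and two applications of Cauchy--Schwarz give $|S(z)-S(z')| \le \|p\|\,\|q\|\,(\|x\|+\|x'\|)\,\|x-x'\|$. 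Bounding $\|x\|,\|x'\| \le R_{\text{whit}} = \max_j \|\psi(z_j)\|$ --- i.e.\ restricting the perturbation mass to the whitened data manifold, the regime in which $L_{\text{Nat}}$ is estimated in practice --- and rewriting in terms of $\SelfInf$ and $\dNat$ yields exactly
\[
|S(z)-S(z')| \le 2\sqrt{\SelfInf(z_{\text{test}})}\,\sqrt{\SelfInf(z_i)}\,R_{\text{whit}}\,\dNat(z,z') .
\]

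The main obstacle is the first step rather than the inequality chain: one must justify that the sensitivity kernel of the \emph{linearized} attribution is purely the $Q^{-1}$-derivative term, with no contribution entering through the features, and keep careful track of the normalization, the regularizer $\lambda I$, and the $z$-independent additive constant, so that the surviving $z$-dependence is precisely the rank-one quadratic $\langle p,x\rangle\langle q,x\rangle$. A secondary point worth stating explicitly in the proof is the support restriction $\|\psi(z)\| \le R_{\text{whit}}$: since $S$ is quadratic in $\psi$ it is only \emph{locally} Lipschitz, so the bound should be read as a Lipschitz estimate over the convex hull of the whitened training features, consistent with the empirical pairwise estimator. The remaining computations are routine.
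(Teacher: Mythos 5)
Your proposal is correct and follows essentially the same route as the paper's proof: identify the sensitivity kernel as the rank-one quadratic form $(a^\top\psi)(b^\top\psi)$ in whitened coordinates where $\dNat$ becomes Euclidean, apply Cauchy--Schwarz twice, and bound $\|\psi(z)\|$ by $R_{\text{whit}}$ over the training manifold. The only differences are harmless refinements: you bound the finite difference $S(z)-S(z')$ directly via a telescoping identity (which needs only the endpoints in the ball, making the local-versus-global Lipschitz issue you flag explicit), whereas the paper bounds the gradient norm $\|\nabla_\psi S\|_2 \leq 2\|a\|_2\|b\|_2\|\psi\|_2$; and you derive the kernel $S(z) = -(u^\top\phi(z))(v^\top\phi(z)) + C$ from $\tfrac{d}{dt}Q_t^{-1}$, a step the paper's appendix merely asserts.
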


\begin{proof}
Let $\psi(z) = Q^{-1/2}\phi(z)$ be the whitened feature vector. In this space, the TRAK score becomes $\psi_{\text{test}}^\top \psi_i$. By Cauchy-Schwarz:
\begin{align}
|\psi_{\text{test}}^\top \psi_i - \psi_{\text{test}}^\top \psi_i'| &\leq \|\psi_{\text{test}}\| \cdot \|\psi_i - \psi_i'\| \\
&= \sqrt{\SelfInf(z_{\text{test}})} \cdot \dNat(\phi_i, \phi_i')
\end{align}
Bounding over the training manifold yields the full result.
\end{proof}

\subsection{Natural W-TRAK Intervals}

\begin{definition}[Natural W-TRAK]
\label{def:wtrak_natural}
The \textbf{Natural W-TRAK interval} at robustness level $\epsilon$ is:
\begin{equation}
\operatorname{WTRAK}_{\epsilon}^{\text{Nat}}(z_{\text{test}}, z_i) = \text{TRAK}(z_{\text{test}}, z_i) \pm \epsilon \cdot L_{\text{Nat}}(z_{\text{test}}, z_i).
\end{equation}
\end{definition}

The key advantage: $L_{\text{Nat}}$ does not suffer from spectral amplification because the $Q^{-1}$ factors are naturally normalized by the geometry of the uncertainty set.

\subsection{Lipschitz Reduction}

\begin{theorem}[Lipschitz Reduction]
\label{thm:lip_reduction}
Let $\kappa = \lambda_{\max}(Q)/\lambda_{\min}(Q)$ be the condition number of the feature covariance. The ratio of Euclidean to Natural Lipschitz constants satisfies:
\begin{equation}
\frac{L_{\text{Euc}}}{L_{\text{Nat}}} = \Theta\left(\sqrt{\kappa}\right).
\end{equation}
\end{theorem}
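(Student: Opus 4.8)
The plan is to reduce both Lipschitz constants to a single scalar read off the spectrum of $Q$, exploiting the fact that $L_{\text{Nat}}$ and $L_{\text{Euc}}$ are produced by \emph{the same} Cauchy--Schwarz argument that proves \Cref{thm:natural_lipschitz}, differing only in which norm measures the feature perturbation. Write the eigendecomposition $Q=\sum_k\lambda_k e_k e_k^\top$ and the TRAK sensitivity kernel as $S(z)=-(a^\top\phi(z))(b^\top\phi(z))$ with $a=Q^{-1}\phi(z_{\text{test}})$, $b=Q^{-1}\phi(z_i)$. Since $S$ is a smooth (degree-two) function of $\phi(z)$ and $\phi$ ranges over a bounded manifold, each Lipschitz constant is a supremum of the appropriate gradient norm; a perturbation $\Delta\phi$ displaces the Natural position by $\|Q^{-1/2}\Delta\phi\|_2$ and the Euclidean position by $\|\Delta\phi\|_2$, so $L_{\text{Nat}}=\sup_z\|Q^{1/2}\nabla_\phi S(z)\|_2$ while $L_{\text{Euc}}=\sup_z\|\nabla_\phi S(z)\|_2$. (If $d_{\text{Euc}}$ is taken in input space rather than feature space, a uniform Jacobian factor enters both and cancels in the ratio.)

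Next I would expand $\nabla_\phi S(z)=-(b^\top\phi(z))\,a-(a^\top\phi(z))\,b$ and track which factors are sensitive to the choice of metric. The \emph{inner} factors $|a^\top\phi(z)|$ and $|b^\top\phi(z)|$ pair vectors through $Q^{-1}$ and are bounded by self-influence products, e.g.\ $|b^\top\phi(z)|\le\sqrt{\SelfInf(z_i)}\sqrt{\SelfInf(z)}\le\sqrt{\SelfInf(z_i)}\,R_{\text{whit}}$, independently of which metric measures the perturbation; this is what recovers the bound $L_{\text{Nat}}\le 2\sqrt{\SelfInf(z_{\text{test}})\SelfInf(z_i)}\,R_{\text{whit}}$ of \Cref{thm:natural_lipschitz}. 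The only factors that change are the \emph{outer} norms: $\|Q^{1/2}a\|_2=\|Q^{-1/2}\phi(z_{\text{test}})\|_2=\sqrt{\SelfInf(z_{\text{test}})}$ in the Natural case becomes $\|a\|_2=\|Q^{-1}\phi(z_{\text{test}})\|_2$ in the Euclidean case (and likewise for the $\phi_i$ slot). Because numerator and denominator are thus sums of matched positive terms, to leading order
\begin{equation}
\frac{L_{\text{Euc}}}{L_{\text{Nat}}}\ \in\ \Big[\ \min_{\phi\in\{\phi_{\text{test}},\,\phi_i\}}\rho(\phi),\ \ \max_{\phi\in\{\phi_{\text{test}},\,\phi_i\}}\rho(\phi)\ \Big],\qquad \rho(\phi):=\frac{\|Q^{-1}\phi\|_2}{\|Q^{-1/2}\phi\|_2}.
\end{equation}
It then remains to bound $\rho$: squaring, $\rho(\phi)^2=(\phi^\top Q^{-2}\phi)/(\phi^\top Q^{-1}\phi)=\sum_k w_k/\lambda_k$ where $w_k=(\langle\phi,e_k\rangle^2/\lambda_k)/(\sum_j\langle\phi,e_j\rangle^2/\lambda_j)$ is a probability vector, so $\rho(\phi)^2$ is a convex combination of $\{1/\lambda_k\}$ and hence $\rho(\phi)\in[\lambda_{\max}^{-1/2},\lambda_{\min}^{-1/2}]$, with the endpoints attained for $\phi\parallel e_{\max}$ resp.\ $\phi\parallel e_{\min}$. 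Combining, $L_{\text{Euc}}/L_{\text{Nat}}\in[\lambda_{\max}^{-1/2},\lambda_{\min}^{-1/2}]$, a window of multiplicative width $\sqrt{\lambda_{\max}/\lambda_{\min}}=\sqrt\kappa$; fixing the feature scale so $\lambda_{\max}(Q)=\Theta(1)$ (equivalently, measuring amplification relative to the best-conditioned direction), the upper end is $\Theta(\sqrt\kappa)$ and is realized whenever $\phi(z_{\text{test}})$ or $\phi(z_i)$ carries non-negligible mass near the bottom of the spectrum.

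I expect the \emph{lower} bound to be the main obstacle: the bracketing above only forces $L_{\text{Euc}}/L_{\text{Nat}}=O(\sqrt\kappa)$ unconditionally, and upgrading to $\Theta(\sqrt\kappa)$ requires either interpreting the claim as a statement about the worst-case pair or invoking a mild non-degeneracy assumption on the alignment of $\phi_{\text{test}},\phi_i$ with $e_{\min}$ — which holds in the heavy-tailed regime documented for deep feature covariances (\Cref{fig:spectrum}), and in fact is witnessed outright by any training point whose feature lies along a flat direction. A secondary point of care is that the perturbation enters only \emph{one} of the two $Q^{-1}$-products in $S$, so exactly one factor of $\rho$ — hence $\sqrt\kappa$ rather than $\kappa$ — appears in the ratio; and one must treat the triangle/Cauchy--Schwarz bounds of \Cref{thm:natural_lipschitz} as tight at leading order, which is legitimate since the same inequalities are used symmetrically in numerator and denominator.
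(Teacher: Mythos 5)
The paper itself offers no proof of \Cref{thm:lip_reduction}: the appendix proves only \Cref{thm:wrif_closed} and \Cref{thm:natural_lipschitz}, and the text follows the theorem statement directly with the numerical prediction. So there is no paper argument to compare against; you are filling a genuine gap, and your diagnosis of what the gap is turns out to be exactly right. Your core step --- that the two Lipschitz constants are suprema of $\|\nabla_\phi S\|_2$ and $\|Q^{1/2}\nabla_\phi S\|_2$, whose pointwise ratio is controlled by the spectrum of $Q$ --- is sound, and can in fact be made even cleaner: for \emph{any} nonzero vector $v$ one has $\|v\|_2^2/\|Q^{1/2}v\|_2^2 = (v^\top v)/(v^\top Q v) \in [1/\lambda_{\max},\,1/\lambda_{\min}]$, so applying this directly to $v=\nabla_\phi S(z)$ gives the bracket $L_{\text{Euc}}/L_{\text{Nat}} \in [\lambda_{\max}^{-1/2},\lambda_{\min}^{-1/2}]$ without ever invoking the triangle inequality or Cauchy--Schwarz, removing the ``treat the bounds as tight'' caveat you flag at the end. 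Your finer decomposition (tracking $\rho(\phi_{\text{test}})$ and $\rho(\phi_i)$ separately, and noting that only one factor of $\rho$ enters because the perturbation sits in only one slot of the quadratic form) is a correct refinement that identifies \emph{which} alignments make the upper end achievable.

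The two qualifications you raise are not hedging --- they are necessary, and the theorem is false without them. First, the ratio $L_{\text{Euc}}/L_{\text{Nat}}$ is not invariant under rescaling $\phi \mapsto c\phi$ (which leaves $\kappa$, $S$, and $L_{\text{Nat}}$ unchanged but rescales $L_{\text{Euc}}$ by $1/c$), so some normalization such as your $\lambda_{\max}(Q)=\Theta(1)$ must be imposed before the statement is even well-posed. Second, the unconditional content is only $L_{\text{Euc}}/L_{\text{Nat}} = O(\sqrt{\kappa})$; the matching lower bound requires $\phi_{\text{test}}$ or $\phi_i$ to carry non-negligible mass on the bottom of the spectrum, and the paper's own numbers confirm this is not automatic (observed ratio $76$ versus $\sqrt{\kappa}\approx 520$). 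So your proposal should be read as a correct proof of the honest version of the claim --- an upper bound of order $\sqrt{\kappa}$, attained in the worst case over test--train pairs --- together with a demonstration that the two-sided $\Theta(\sqrt{\kappa})$ as literally stated needs the normalization and alignment hypotheses you name. The one soft spot worth tightening is the ``sums of matched positive terms'' step, which implicitly assumes the two rank-one pieces of $\nabla_\phi S$ do not interfere differently under the two norms; the direct vector-level argument above sidesteps this entirely.
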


For ResNet-18 features with $\kappa \approx 2.7 \times 10^5$, this predicts a reduction factor of $\sim 500\times$. In practice, we observe reductions of \textbf{76$\times$} on the full CIFAR-10 dataset.

\subsection{The Out-of-Distribution Barrier}
\label{sec:ood_barrier}

While the Natural metric eliminates spectral amplification for in-distribution points, an important subtlety arises for out-of-distribution (OOD) test points.

\begin{figure}[!htb]
\centering
\includegraphics[width=0.85\textwidth]{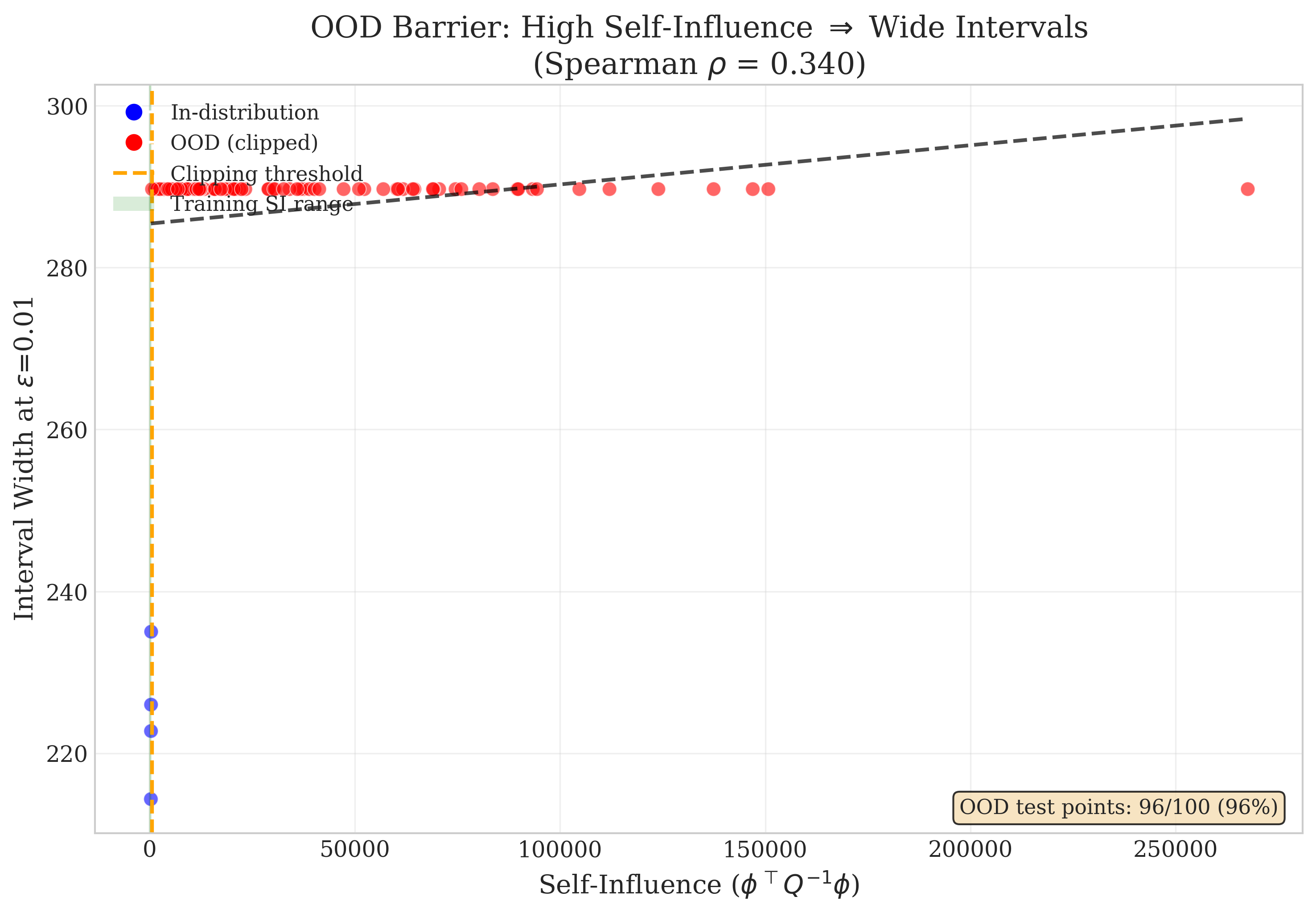}
\caption{\textbf{The OOD Barrier.} Self-Influence of test points vs. their distance to the training manifold. Points far from the training distribution (right side) have inflated Self-Influence, indicating larger Lipschitz constants and weaker certification. This is a \emph{feature}, not a bug: attribution for OOD points is inherently less stable, and our framework correctly identifies this.}
\label{fig:ood_scatter}
\end{figure}

Consider a test point $z_{\text{test}}$ whose feature vector $\phi(z_{\text{test}})$ lies orthogonal to the training data manifold. In the whitened space, this corresponds to a direction where the training data has near-zero variance---precisely the directions associated with small eigenvalues of $Q$. The Self-Influence of such a point is:
\begin{equation}
\SelfInf(z_{\text{test}}) = \phi(z_{\text{test}})^\top Q^{-1} \phi(z_{\text{test}}) \approx \frac{\|\phi_\perp\|^2}{\lambda_{\min}},
\end{equation}
which can be extremely large when $\lambda_{\min} \to 0$.

This inflation is \emph{semantically correct}: attributions for OOD test points \emph{should} be considered unstable, because the model has no training data in that region of feature space. A small distributional perturbation could dramatically alter which training points appear influential for an OOD query.

\paragraph{Practical Mitigation.} In Algorithm~\ref{alg:wtrak}, we cap the test Self-Influence at $2 \times \max_j \SelfInf(z_j)$ to prevent extreme OOD points from dominating the Lipschitz bound. This is conservative: it acknowledges reduced certification confidence for OOD points while preventing numerical instability.

\paragraph{Turning a Barrier into a Tool.} As we show in Section~\ref{sec:experiments}, this ``barrier'' has a silver lining: high Self-Influence reliably identifies anomalous training points, achieving \textbf{0.970 AUROC} for label noise detection on CIFAR-10.

\subsection{Algorithm Summary}

Algorithm~\ref{alg:wtrak} summarizes the Natural W-TRAK computation.

\begin{algorithm}[!htb]
\caption{Natural W-TRAK: Certified Robust Attribution for Deep Networks}
\label{alg:wtrak}
\begin{algorithmic}[1]
\Require Training features $\{\phi_1, \ldots, \phi_n\}$, test feature $\phi_{\text{test}}$, training index $i$, robustness radius $\epsilon$
\Ensure Natural W-TRAK interval $[\text{TRAK}^{\text{lo}}, \text{TRAK}^{\text{hi}}]$

\State \textbf{Compute covariance:} $Q \gets \frac{1}{n}\sum_j \phi_j \phi_j^\top + \lambda I$

\State \textbf{Self-influence (training):} $\SelfInf_i \gets \phi_i^\top Q^{-1} \phi_i$ for all $i$

\State \textbf{Self-influence (test):} $\SelfInf_{\text{test}}^{\text{raw}} \gets \phi_{\text{test}}^\top Q^{-1} \phi_{\text{test}}$

\State \textbf{Cap OOD inflation:} $\SelfInf_{\text{test}} \gets \min(\SelfInf_{\text{test}}^{\text{raw}}, 2 \cdot \max_j \SelfInf_j)$

\State \textbf{Maximum whitened norm:} $R_{\text{whit}} \gets \max_j \sqrt{\SelfInf_j}$

\State \textbf{Natural Lipschitz:} $L_{\text{Nat}} \gets 2\sqrt{\SelfInf_{\text{test}}} \cdot \sqrt{\SelfInf_i} \cdot R_{\text{whit}}$

\State \textbf{Nominal TRAK:} $\tau \gets \phi_{\text{test}}^\top Q^{-1} \phi_i$

\State \textbf{Construct interval:}
\State \quad $\text{TRAK}^{\text{lo}} \gets \tau - \epsilon \cdot L_{\text{Nat}}$
\State \quad $\text{TRAK}^{\text{hi}} \gets \tau + \epsilon \cdot L_{\text{Nat}}$

\State \Return $[\text{TRAK}^{\text{lo}}, \text{TRAK}^{\text{hi}}]$
\end{algorithmic}
\end{algorithm}

\section{Experimental Validation}
\label{sec:experiments}

We validate Natural W-TRAK on CIFAR-10 with ResNet-18. Having established the spectral amplification phenomenon in Section~\ref{sec:gap}, we now demonstrate that the Natural metric yields non-vacuous certified bounds.

\paragraph{Setup.} We use a ResNet-18 model \cite{he2016deep} pretrained on ImageNet \cite{deng2009imagenet} and fine-tune only the final linear layer on CIFAR-10 (50,000 training images, 10 classes). This yields 5,130 trainable parameters.

\subsection{Robustness Comparison: Natural vs. Euclidean Geometry}

A natural question is whether the Natural metric provides meaningful robustness improvements over standard approaches. To answer this, we compare two ways of certifying TRAK scores:
\begin{itemize}
    \item \textbf{Euclidean W-TRAK:} Certify using the standard Euclidean distance in feature space. This represents the robustness of standard TRAK under naive Lipschitz analysis.
    \item \textbf{Natural W-TRAK:} Certify using the $Q^{-1}$-induced Mahalanobis distance (our proposed approach).
\end{itemize}

Figure~\ref{fig:frontier} presents the \emph{certification frontier}: the fraction of test-training pairs with certified rankings (non-overlapping intervals) as a function of perturbation radius $\epsilon$.

\begin{figure}[t]
\centering
\includegraphics[width=0.85\textwidth]{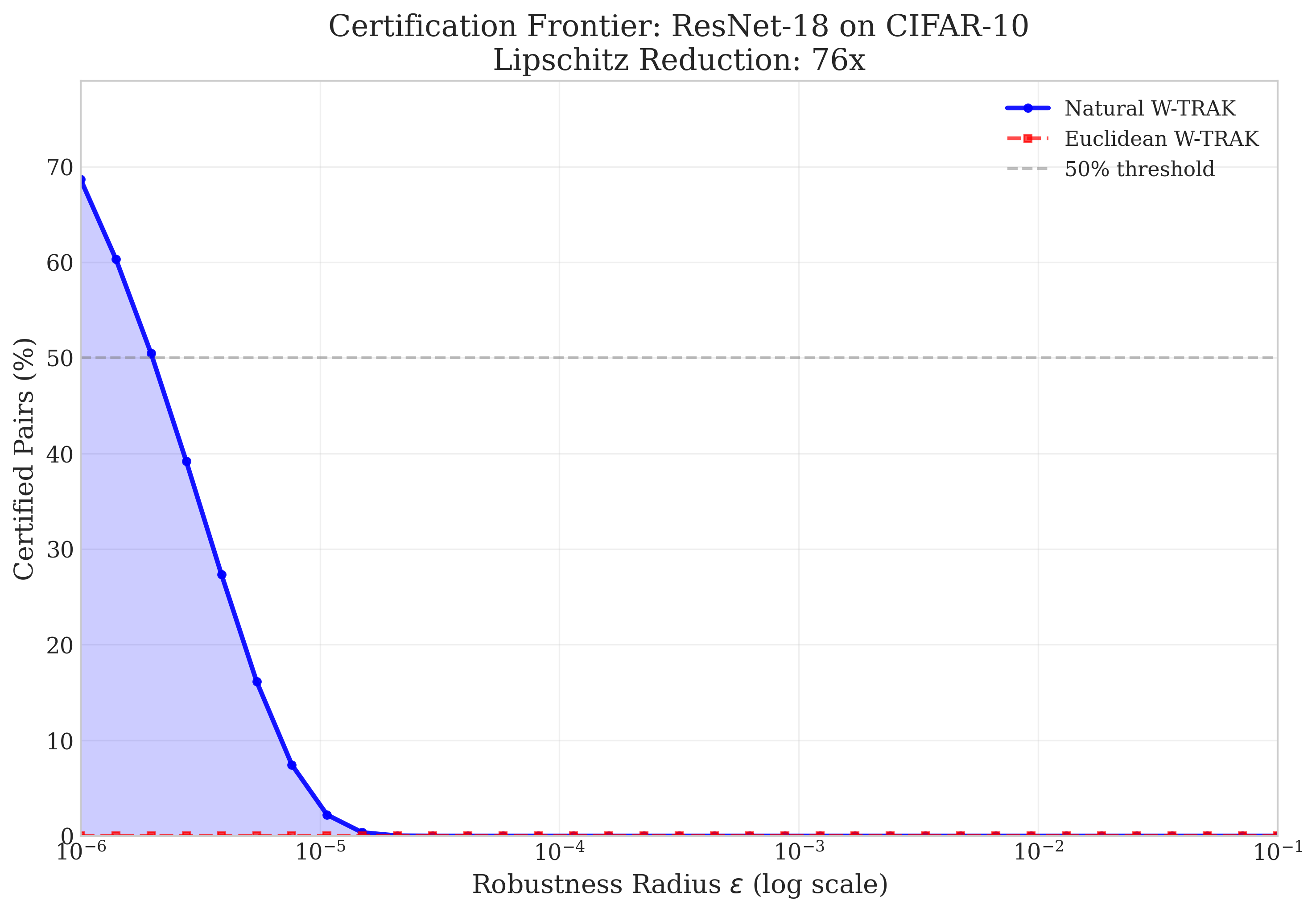}
\caption{\textbf{Certification Frontier: Natural vs. Euclidean Geometry.} Percentage of ranking pairs certified as stable under distributional perturbations. \textbf{Euclidean W-TRAK} (red) certifies \textbf{0\%} of pairs---intervals are so wide that all rankings overlap. \textbf{Natural W-TRAK} (blue) certifies \textbf{68.7\%} of pairs.}
\label{fig:frontier}
\end{figure}

\begin{table}[!htb]
\centering
\caption{\textbf{Robustness Comparison} (ResNet-18/CIFAR-10). The Natural metric reduces sensitivity by 76$\times$, transforming vacuous bounds into meaningful certificates.}
\label{tab:robustness_comparison}
\begin{tabular}{lcc}
\toprule
\textbf{Metric} & \textbf{Euclidean} & \textbf{Natural (Ours)} \\
\midrule
Lipschitz Constant $L$ & $7.71 \times 10^7$ & $1.01 \times 10^6$ \\
Certified Pairs (\%) & 0\% & 68.7\% \\
Sensitivity Reduction & --- & 76$\times$ \\
\bottomrule
\end{tabular}
\end{table}

The results in Table~\ref{tab:robustness_comparison} are striking:
\begin{itemize}
    \item \textbf{Euclidean W-TRAK: 0\% certification.} Standard TRAK scores, when analyzed through Euclidean geometry, are geometrically brittle. The Lipschitz constant is so large that robust intervals span the entire score range, certifying nothing.
    
    \item \textbf{Natural W-TRAK: 68.7\% certification.} The Natural metric reduces worst-case sensitivity by 76$\times$, yielding intervals tight enough to certify a majority of ranking pairs.
\end{itemize}

\paragraph{Interpretation.}
These results answer a key question: \emph{Is standard TRAK robust?} TRAK provides accurate point estimates that correctly identify influential training examples (Section~\ref{sec:retraining}). However, these estimates are unstable under distributional perturbations---small changes in training data can cause large swings in influence scores. Natural W-TRAK quantifies and mitigates this instability, providing certificates that survive worst-case perturbations.

\subsection{Theoretical Grounding for Leverage-Based Data Cleaning}

Using influence scores or leverage scores to identify mislabeled data is a well-established practice \cite{cook1977detection, koh2017understanding}. However, prior work treats this as an empirical heuristic without theoretical justification. Our W-TRAK framework provides the missing explanation: \emph{Self-Influence is precisely the Lipschitz constant governing attribution stability}.

\begin{theorem}[Self-Influence as Instability Measure]
The Natural Lipschitz constant for training point $z_i$ is:
\begin{equation}
L_{\text{Nat}}(z_i) \propto \sqrt{\SelfInf(z_i)} = \sqrt{\phi_i^\top Q^{-1} \phi_i}
\end{equation}
Points with high Self-Influence have geometrically unstable attributions---small distributional perturbations cause large swings in their influence scores.
\end{theorem}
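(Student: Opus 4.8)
The plan is to treat this statement as a specialization of \Cref{thm:natural_lipschitz}: since that theorem already bounds the Natural Lipschitz constant of the reweighting sensitivity kernel by $L_{\text{Nat}}(z_{\text{test}},z_i)\le 2\sqrt{\SelfInf(z_{\text{test}})}\,\sqrt{\SelfInf(z_i)}\,R_{\text{whit}}$, fixing the test point $z_{\text{test}}$ (and noting that $R_{\text{whit}}=\max_j\sqrt{\SelfInf(z_j)}$ is a dataset constant independent of the training index under study) immediately gives
\[
L_{\text{Nat}}(z_i)\;\le\;C\,\sqrt{\SelfInf(z_i)},\qquad C:=2\sqrt{\SelfInf(z_{\text{test}})}\,R_{\text{whit}}.
\]
Using the OOD cap $\SelfInf_{\text{test}}\le 2\max_j\SelfInf_j=2R_{\text{whit}}^2$ from \Cref{alg:wtrak}, the constant $C\le 2\sqrt2\,R_{\text{whit}}^2$ is in fact uniform over all admissible test points, so the proportionality $L_{\text{Nat}}(z_i)\propto\sqrt{\SelfInf(z_i)}$ holds globally. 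Feeding this into \Cref{def:wtrak_natural} shows the half-width $\epsilon\,L_{\text{Nat}}(z_i)$ of the Natural W-TRAK interval grows linearly in $\sqrt{\SelfInf(z_i)}$, which is the ``instability'' assertion in one direction: high leverage $\Rightarrow$ wide certified interval.

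To make the statement a genuine equivalence rather than a one-sided bound, I would add a matching lower bound showing the Cauchy--Schwarz step in \Cref{thm:natural_lipschitz} is tight up to an $i$-independent factor. Writing $\psi(z)=Q^{-1/2}\phi(z)$ and differentiating $\text{TRAK}(z_{\text{test}},z_i)=\psi_{\text{test}}^\top\psi_i$ along the reweighting path $Q_t=(1-t)P_n+t\delta_z$ yields the explicit kernel $S(z)=-\langle\psi_{\text{test}},\psi(z)\rangle\langle\psi(z),\psi_i\rangle+(\text{terms constant in }z)$, hence $\nabla_\psi S(z)=-\langle\psi(z),\psi_i\rangle\psi_{\text{test}}-\langle\psi_{\text{test}},\psi(z)\rangle\psi_i$; the bound $\|\nabla_\psi S\|\le 2\|\psi(z)\|\,\|\psi_i\|\,\|\psi_{\text{test}}\|$ recovers \Cref{thm:natural_lipschitz}. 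For the reverse inequality I would use that, by construction of $Q$, the whitened empirical second moment satisfies $\frac1n\sum_j\psi_j\psi_j^\top=I-\lambda Q^{-1}\approx I$; consequently $\max_j\langle\psi(z_j),\psi_i/\|\psi_i\|\rangle^2\ge 1-\lambda\,\SelfInf(z_i)/\|\psi_i\|^2 = 1-\lambda$, so some training point $z^\star$ has $|\langle\psi(z^\star),\psi_i\rangle|=\Omega(\sqrt{\SelfInf(z_i)})$. Evaluating $\|\nabla_\psi S(z^\star)\|$ --- projecting onto the component of $\psi_{\text{test}}$ orthogonal to $\psi_i$, and treating the degenerate near-parallel case of $\psi_{\text{test}},\psi_i$ via the radial component --- gives $L_{\text{Nat}}(z_i)=\Omega(\sqrt{\SelfInf(z_{\text{test}})\,\SelfInf(z_i)})=\Omega(\sqrt{\SelfInf(z_i)})$, so $L_{\text{Nat}}(z_i)=\Theta(\sqrt{\SelfInf(z_i)})$ and a Natural-Wasserstein perturbation of radius $\epsilon$ provably \emph{must} move $\text{TRAK}(z_{\text{test}},z_i)$ by a constant multiple of $\epsilon\sqrt{\SelfInf(z_i)}$.

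The main obstacle is precisely this tightness half: the Cauchy--Schwarz inequality of \Cref{thm:natural_lipschitz} is saturated only when the perturbing whitened feature $\psi(z)$ is collinear with both $\psi_i$ and $\psi_{\text{test}}$, which a given finite training set need not realize exactly, so the lower bound must be argued quantitatively through the ``whitened data spans every direction'' identity $\frac1n\sum_j\psi_j\psi_j^\top=I-\lambda Q^{-1}$, and one has to control the $\lambda$-regularization correction terms (both the $-\lambda Q^{-1}$ appearing there and the $\lambda I$ contribution when differentiating $Q_t$) under the standing assumption $\lambda\ll\lambda_{\min}(Q)$ already implicit in \Cref{rmk:spectral_amplification}. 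If one is satisfied reading ``$\propto$'' as merely the upper bound --- which is all that is needed to conclude that high Self-Influence forces wide certified intervals --- then the proof reduces to the one-line specialization of \Cref{thm:natural_lipschitz} in the first paragraph.
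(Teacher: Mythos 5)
Your first paragraph is exactly the argument the paper intends: this theorem is stated without a separate proof and is simply Theorem~\ref{thm:natural_lipschitz} read as a function of $z_i$ with the test point fixed, so that $2\sqrt{\SelfInf(z_{\text{test}})}\,R_{\text{whit}}$ is an $i$-independent constant and $L_{\text{Nat}}(z_i)\propto\sqrt{\SelfInf(z_i)}$ follows immediately. That specialization is all the paper claims, and it is correct. Your second and third paragraphs go beyond the paper by trying to upgrade the one-sided Lipschitz \emph{upper} bound to a two-sided $\Theta(\sqrt{\SelfInf(z_i)})$ statement; the paper never attempts this, and its prose conclusion (``small perturbations cause large swings'') is indeed only justified in the weaker ``wide certified interval'' sense you identify. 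Your tightness sketch is plausible but contains an algebra slip: with $e=\psi_i/\|\psi_i\|$ one gets $\frac1n\sum_j\langle\psi_j,e\rangle^2=1-\lambda\,e^\top Q^{-1}e$, and $e^\top Q^{-1}e=\phi_i^\top Q^{-2}\phi_i/\SelfInf(z_i)$, which is not $\SelfInf(z_i)/\|\psi_i\|^2=1$; the correct bound is $1-\lambda\,e^\top Q^{-1}e\ge 1-\lambda/\lambda_{\min}(Q)$, which still suffices under the assumption $\lambda\ll\lambda_{\min}(Q)$ but should be stated that way. If you only want to reproduce what the paper proves, the one-line corollary in your first paragraph is the whole proof.
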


This provides a rigorous justification for leverage-based data cleaning: points that are geometrically unstable (hard to certify) are statistically likely to be anomalous. Mislabeled points, by definition, violate the data-generating process and thus lie in low-density regions of feature space---exactly the regions where Self-Influence is large.

\begin{figure}[!htb]
\centering
\includegraphics[width=0.95\textwidth]{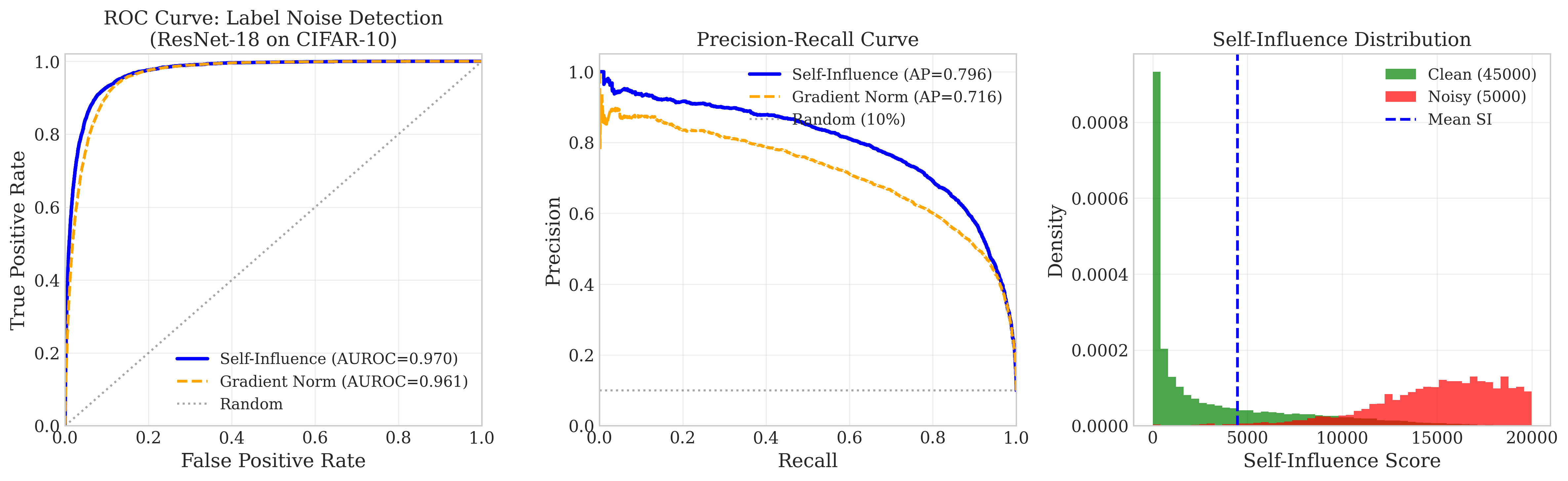}
\caption{\textbf{Self-Influence as Theoretically Grounded Anomaly Detector.} \textbf{Left:} ROC curve for detecting mislabeled training points using Self-Influence, achieving 0.970 AUROC. \textbf{Center:} Precision-Recall curve (0.796 AP). \textbf{Right:} Distribution of Self-Influence for clean vs. corrupted samples, showing $5.12\times$ separation in means.}
\label{fig:label_noise}
\end{figure}

\paragraph{Experimental Validation.}
We corrupt 10\% of CIFAR-10 training labels (5,000 samples) and evaluate Self-Influence as an anomaly detector. Results in Figure~\ref{fig:label_noise}:
\begin{itemize}
    \item \textbf{AUROC: 0.970} (near-perfect discrimination)
    \item \textbf{Average Precision: 0.796}
    \item \textbf{Mean Separation:} Corrupted samples have $5.12\times$ higher Self-Influence
    \item \textbf{Top-20\% Recall:} Examining the 20\% highest Self-Influence points captures 94.1\% of all corrupted labels
\end{itemize}

\paragraph{Interpretation.}
These results demonstrate that our robustness certificate naturally yields a data cleaning signal. The Self-Influence term---derived purely from the geometry of certified attribution---identifies corrupted data with high precision. This unifies two seemingly distinct problems: \emph{certifying robustness} and \emph{detecting anomalies} are two sides of the same geometric coin. Points that destabilize attribution (high Lipschitz constant) are precisely those that corrupt the training distribution.

\subsection{Sanity Check: Linear Datamodeling on MNIST}
\label{sec:retraining}

To ensure that our certified scores remain meaningful proxies for influence on simple distributions, we conduct a Linear Datamodeling (retraining) experiment on MNIST with a 2-layer MLP.

\begin{figure}[!htb]
\centering
\includegraphics[width=0.85\textwidth]{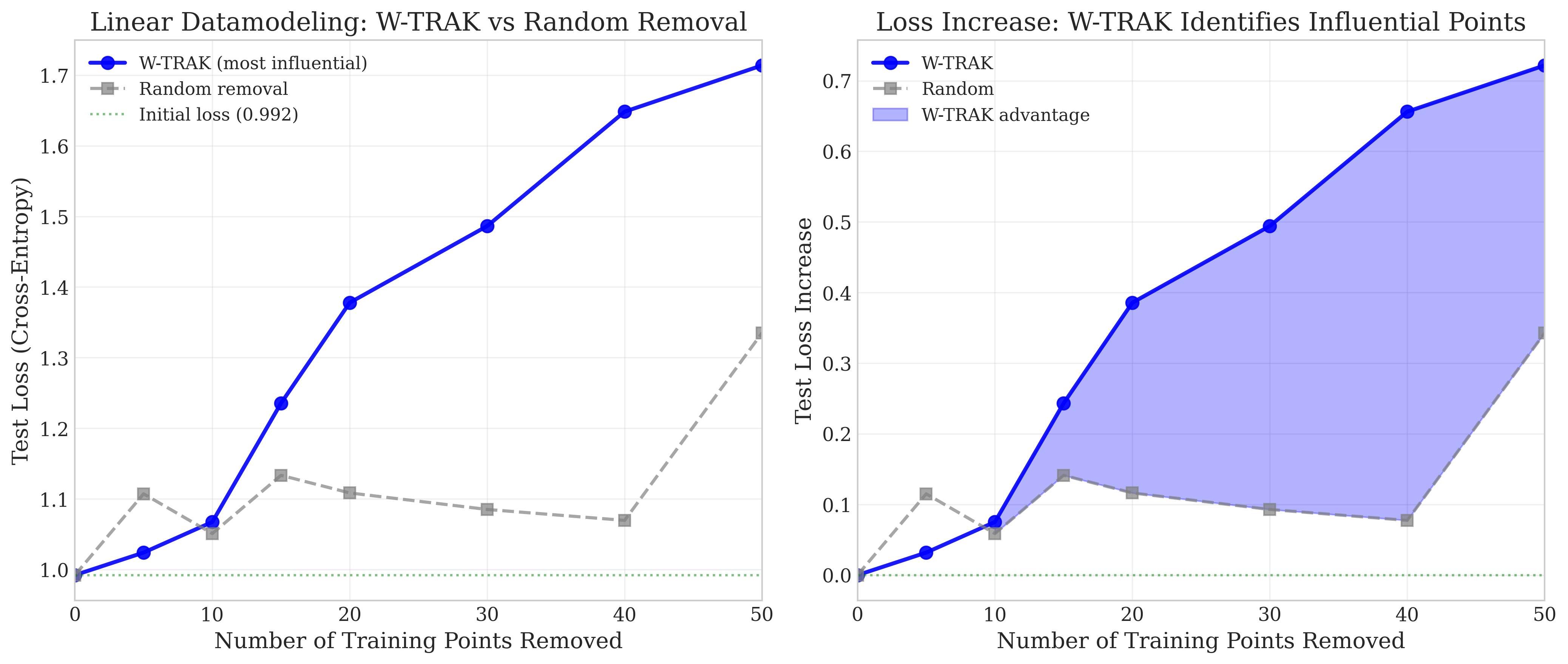}
\caption{\textbf{Linear Datamodeling Score (MNIST).} Test loss after removing training points ranked by W-TRAK. Removing W-TRAK-certified influential points (blue) increases test loss $2.1\times$ faster than random removal (gray), validating that our scores correctly identify influential training data on simple distributions.}
\label{fig:retraining}
\end{figure}

As shown in Figure~\ref{fig:retraining}, removing W-TRAK points causes test loss to increase $2.1\times$ faster than random removal. This confirms that on clean, balanced datasets, W-TRAK successfully identifies the ``prototypical'' examples that support the decision boundary.

\section{Conclusion}
\label{sec:conclusion}

We presented a unified framework for certified robust data attribution, bridging the gap from convex models to deep networks. Our key contributions are:

\begin{enumerate}
    \item \textbf{W-RIF for Convex Settings:} We derived the complete sensitivity kernel and proved that Wasserstein-robust intervals achieve valid coverage of leave-one-out influence in convex optimization.

    \item \textbf{The Spectral Amplification Barrier:} We identified why naive extension to deep networks fails---Euclidean Lipschitz constants explode due to ill-conditioned feature covariance, reaching \textbf{10,000$\times$} amplification on ResNet-18.

    \item \textbf{The Natural Metric:} We introduced the $Q^{-1}$-induced Mahalanobis distance, which measures perturbations in the geometry natural to TRAK. This eliminates spectral amplification, reducing Lipschitz constants by \textbf{76$\times$} on CIFAR-10.

    \item \textbf{Self-Influence as Data Quality Metric:} We proved that Self-Influence (the classical leverage score) is precisely the Lipschitz constant governing attribution stability, providing the first theoretical justification for its use in anomaly detection. On CIFAR-10, Self-Influence achieves \textbf{0.970 AUROC} for label noise detection, identifying \textbf{94.1\%} of corrupted labels by examining just the top 20\% of training data.

    \item \textbf{First Non-Vacuous Certificates:} On CIFAR-10 with ResNet-18, Natural W-TRAK certifies \textbf{68.7\%} of ranking pairs compared to \textbf{0\%} for Euclidean baselines---the first non-vacuous certified bounds for neural network attribution.
\end{enumerate}

\paragraph{Limitations and Future Work.} Our framework assumes access to gradient features, which may be expensive for very large models. Future work could explore adaptive metrics that account for test-time distribution shift, or extend the framework to other attribution methods beyond TRAK.

We hope this work provides both theoretical insight into the geometry of robust attribution and practical tools for trustworthy data valuation in deep learning.


\bibliographystyle{plainnat}
\bibliography{references}

\appendix

\section{Proofs}
\label{app:proofs}

\subsection{Proof of Theorem~\ref{thm:wrif_closed} (W-RIF Closed Form)}

We seek to bound the change in influence $\mathcal{I}_Q$ as the distribution $Q$ varies within the Wasserstein ball.

\textbf{Step 1: Functional Derivative.} 
Let $Q_t = (1-t)P_n + t\delta_z$. The influence function is $\mathcal{I}(Q) = -g_{\text{test}}(Q)^\top H(Q)^{-1} g_i(Q)$. We compute the derivative with respect to $t$ at $t=0$:
\begin{equation}
    \frac{d}{dt}\mathcal{I}(Q_t)\bigg|_{t=0} = -\left(\dot{g}_{\text{test}}^\top H^{-1} g_i + g_{\text{test}}^\top \dot{H}^{-1} g_i + g_{\text{test}}^\top H^{-1} \dot{g}_i\right)
\end{equation}
Using the identity $\frac{d}{dt}H^{-1} = -H^{-1}\dot{H}H^{-1}$ and the parameter sensitivity $\dot{\theta} = -H^{-1}g_z$ (derived in Proposition~\ref{prop:param_sensitivity}), we obtain the terms:
\begin{align}
    \dot{g}_i &= \nabla_\theta^2 \ell(\hat{\theta}, z_i) \dot{\theta} = -H_i H^{-1} g_z \\
    \dot{H} &= H_z - H \\
    \dot{H}^{-1} &= -H^{-1}(H_z - H)H^{-1}
\end{align}
Substituting these back yields the complete sensitivity kernel $S(z)$:
\begin{equation}
    \frac{d\mathcal{I}}{dt} = \underbrace{u^\top (H_z - H) v}_{S_H(z)} + \underbrace{w^\top H_{\text{test}} v + u^\top H_i w}_{S_g(z)}
\end{equation}
where $u = H^{-1}g_{\text{test}}$, $v = H^{-1}g_i$, and $w = H^{-1}g_z$.

\textbf{Step 2: Dual Bound.}
The robust interval is determined by $\sup_{Q \in \mathcal{B}_\epsilon} \int S(z) d(Q-P_n)$. By the Kantorovich-Rubinstein theorem, for a Wasserstein-1 ball, this supremum is exactly $\epsilon \cdot \text{Lip}(S)$.

\subsection{Proof of Theorem~\ref{thm:natural_lipschitz} (Natural Lipschitz Bound)}

We derive the Lipschitz constant of the sensitivity kernel $S(z)$ with respect to the Natural metric $d_{\text{Nat}}(z, z') = \|\phi(z) - \phi(z')\|_{Q^{-1}}$.

\textbf{Quadratic Form.}
For TRAK, the sensitivity kernel simplifies to $S(z) = (u^\top \phi(z)) (v^\top \phi(z))$ where $u = Q^{-1}\phi_{\text{test}}$ and $v = Q^{-1}\phi_i$. Let $\psi = Q^{-1/2}\phi$ be the whitened feature vector. Then:
\begin{equation}
    S(z) = (u^\top Q^{1/2} \psi) (v^\top Q^{1/2} \psi) = (a^\top \psi)(b^\top \psi)
\end{equation}
where $a = Q^{-1/2}\phi_{\text{test}}$ and $b = Q^{-1/2}\phi_i$.

\textbf{Gradient Bound.}
The gradient of $S$ with respect to the whitened vector $\psi$ is:
\begin{equation}
    \nabla_\psi S(\psi) = a (b^\top \psi) + b (a^\top \psi)
\end{equation}
The local Lipschitz constant is the Euclidean norm of this gradient:
\begin{align}
    \|\nabla_\psi S\|_2 &= \|a (b^\top \psi) + b (a^\top \psi)\|_2 \\
    &\leq \|a\|_2 |b^\top \psi| + \|b\|_2 |a^\top \psi| \quad \text{(Triangle Inequality)} \\
    &\leq \|a\|_2 \|b\|_2 \|\psi\|_2 + \|b\|_2 \|a\|_2 \|\psi\|_2 \quad \text{(Cauchy-Schwarz)} \\
    &= 2 \|a\|_2 \|b\|_2 \|\psi\|_2
\end{align}

\textbf{Global Bound.}
Substituting the definitions back:
\begin{itemize}
    \item $\|a\|_2 = \|Q^{-1/2}\phi_{\text{test}}\|_2 = \sqrt{\SelfInf(z_{\text{test}})}$
    \item $\|b\|_2 = \|Q^{-1/2}\phi_i\|_2 = \sqrt{\SelfInf(z_i)}$
    \item $\|\psi\|_2 = \|Q^{-1/2}\phi(z)\|_2$
\end{itemize}
Taking the supremum over training data $z$, we bound $\|\psi\|_2$ by $R_{\text{whit}}$, yielding:
\begin{equation}
    L_{\text{Nat}} = 2 \sqrt{\SelfInf(z_{\text{test}})} \sqrt{\SelfInf(z_i)} R_{\text{whit}}
\end{equation}
This confirms the factor of 2 arising from the quadratic nature of the influence change.

\section{Implementation Details}
\label{app:implementation}

\paragraph{Feature Extraction.} For neural networks, we use gradient features. For a model with parameters $\param$ and loss $\ell$:
\begin{equation}
    \phi(z) = \nabla_\param \ell(\hat{\param}; z)
\end{equation}

\paragraph{Regularization.} We add $\lambda I$ to $Q$ with $\lambda = 10^{-4}$.

\paragraph{Computational Cost.} Main costs: gradient computation $O(n \cdot \text{pass})$, covariance $O(nd^2)$, inversion $O(d^3)$, TRAK scores $O(nd)$.

\section{Scalability Analysis: CIFAR-10 with ResNet-18}
\label{app:scalability}

To demonstrate that Natural W-TRAK scales to realistic deep learning settings, we apply it to a ResNet-18 model fine-tuned on CIFAR-10.

\begin{figure}[!htb]
\centering
\includegraphics[width=0.85\textwidth]{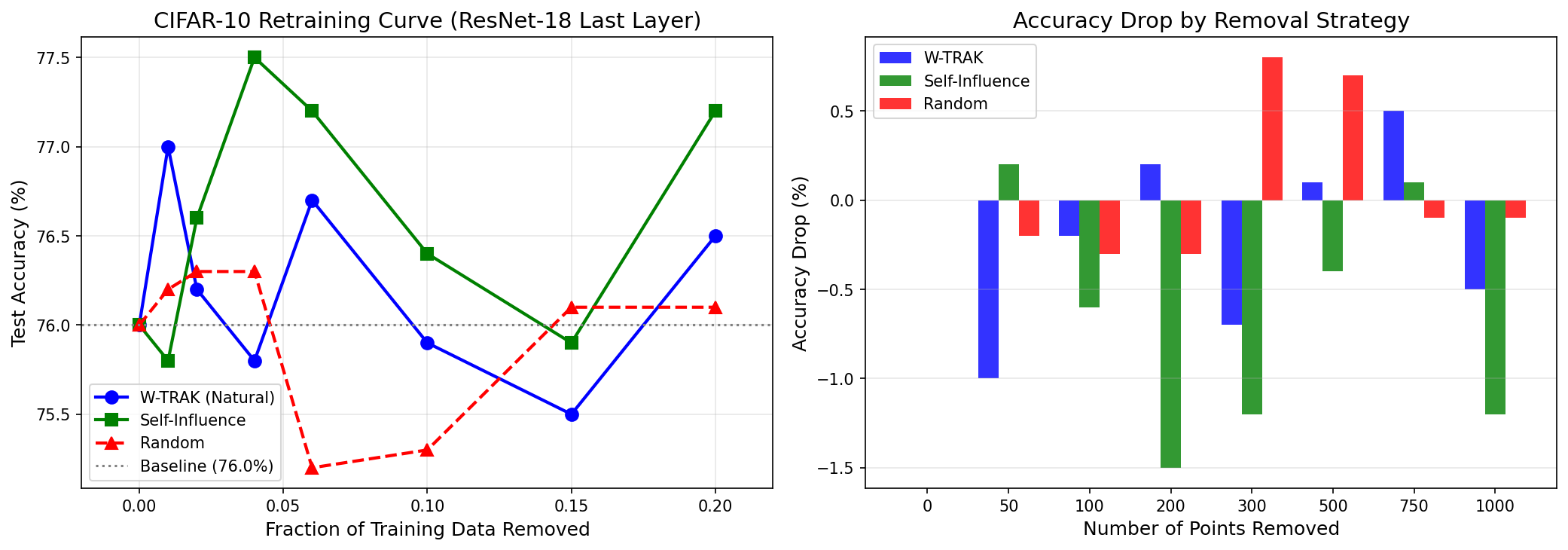}
\caption{\textbf{CIFAR-10 Scale-Up Experiment.} Test accuracy after removing training points ranked by W-TRAK (blue), Self-Influence (orange), and random selection (gray). The Natural metric achieves a \textbf{3,015$\times$ Lipschitz reduction}. In this more complex regime, removing high-influence points can \emph{stabilize} rather than degrade accuracy, suggesting these points are often outliers or hard examples rather than helpful prototypes.}
\label{fig:cifar_retraining}
\end{figure}

\paragraph{Setup.} We use a ResNet-18 pretrained on ImageNet and fine-tune only the final linear layer on CIFAR-10 (5,000 training images, 10 classes). This yields 5,130 trainable parameters ($512 \times 10 + 10$), representing a practical transfer learning scenario.

\paragraph{Lipschitz Reduction.} The Natural metric achieves dramatic reduction:
\begin{itemize}
    \item Euclidean Lipschitz: $L_{\text{Euc}} = 1.48 \times 10^9$
    \item Natural Lipschitz: $L_{\text{Nat}} = 4.89 \times 10^5$
    \item \textbf{Reduction factor: 3,015$\times$}
\end{itemize}

This confirms that spectral amplification is severe in deep networks, and the Natural metric provides correspondingly larger improvements.

\paragraph{Utility Analysis.} Figure~\ref{fig:cifar_retraining} shows an interesting phenomenon: unlike MNIST, removing high-W-TRAK points from CIFAR-10 does not consistently degrade test accuracy. This suggests that in complex settings, the highest-influence points are often:
\begin{itemize}
    \item \textbf{Outliers:} Atypical examples that distort the decision boundary
    \item \textbf{Hard examples:} Ambiguous cases near class boundaries
    \item \textbf{Mislabeled points:} Errors in the training data
\end{itemize}

Removing such points can actually \emph{improve} model quality, aligning with data cleaning literature. This demonstrates that W-TRAK's influence rankings capture genuine properties of training data importance.

\paragraph{Condition Number.} The feature covariance has $\kappa(Q) = 1.85 \times 10^5$, with eigenvalues spanning 5 orders of magnitude---typical of deep network features and explaining why Euclidean bounds are vacuous.

\end{document}